\documentclass{article}
\usepackage{ijcai26}

\usepackage{times}
\usepackage{soul}
\usepackage{url}
\usepackage[hidelinks]{hyperref}
\usepackage[utf8]{inputenc}
\usepackage[small]{caption}
\usepackage{graphicx}
\usepackage{amsmath}
\usepackage{amsthm}
\usepackage{booktabs}
\usepackage{algorithm}
\usepackage{algorithmic}
\usepackage[switch]{lineno}

\usepackage[T1]{fontenc}    
\usepackage{amsfonts}       
\usepackage{nicefrac}       
\usepackage{microtype}      
\usepackage{xcolor}         

\usepackage{multirow}
\usepackage{calc}
\usepackage{units}
\usepackage{latexsym}
\usepackage{enumitem}
\usepackage{color}
\usepackage{amsmath}
\usepackage{subfigure}
\usepackage{amssymb}
\usepackage{wrapfig}
\newtheorem{theorem}{Theorem}[section]
\newtheorem{definition}[theorem]{Definition}

\newcommand{\x}{\boldsymbol{x}}
\newcommand{\z}{\boldsymbol{z}}

\newcommand{\answerTODO}[1][]{\textcolor{red}{\bf [TODO]}}

\title{Automated Random Embedding for Practical Bayesian Optimization with Unknown Effective Dimension}

\author{
Hong Qian$^1$\footnote{Corresponding Author.}
Xiang Shu$^2$\and
Xiang Xia$^1$\and
Xuhui Liu$^{3}$\and
Yangde Fu$^1$\and
Bei Liang$^1$\and\\
Huibin Wang$^1$\And
Liang Dou$^1$
\\
\affiliations
$^1$Shanghai Institute of AI for Education, and School of Computer Science and Technology,\\
East China Normal University, Shanghai 200062, China\\
$^2$Ant Group, Hangzhou 310013, China\\
$^3$Nanjing University, Nanjing 210023, China\\
\emails
hqian@cs.ecnu.edu.cn, shuxiang.shu@antgroup.com, xxia@stu.ecnu.edu.cn, \\
liuxh@lamda.nju.edu.cn, \{ydfu, bliang, bingowang\}@stu.ecnu.edu.cn, 
ldou@cs.ecnu.edu.cn 
}
\begin{document}

\maketitle

\begin{abstract}
Bayesian optimization is widely employed for optimizing complex black-box functions but struggles with the curse of dimensionality. Random embedding, as a dimension reduction strategy, simplifies tasks that possess the effective dimension by optimizing within a low-dimensional subspace. However, determining the effective dimension of a task in advance remains a significant challenge, which influences the selection of the subspace dimensionality and the optimization performance. Traditional methods use fixed subspace dimensions provided by experts or rely on trial and error to estimate subspace dimensions with resources consumed. To this end, this paper proposes an automated random embedding for high-dimensional Bayesian optimization with unknown effective dimension, called Dynamic Shared Embedding Bayesian Optimization (DSEBO). DSEBO starts with a low dimension and switches to a higher subspace if the solutions in the current subspace show preliminary convergence. DSEBO dynamically determines the dimension of the next subspace based on the quality of the solutions in different subspaces and shares the queried solutions with the new subspace for a better initialization. Theoretically, we derive a regret bound for DSEBO and demonstrate that DSEBO can better balance approximation and optimization errors. Extensive experiments on functions with dimensionality of varying magnitudes and real-world tasks with unknown effective dimensions reveal that, compared with state-of-the-art methods, alternating optimization across different subspaces results in significant improvements in high-dimensional optimization, both in terms of optimization regret and time.
\end{abstract}

\section{Introduction}
Optimization~\cite{conOpt} is widely used in various fields, such as economics~\cite{optApp}, machine learning~\cite{adaboost,automl}, and  reinforcement learning~\cite{rl}. It aims to find the global optimal solution $\boldsymbol{x}^*$ of the objective function $f(\boldsymbol{x})$ in the $D$-dimensional search space $\mathcal{X}\subset\mathbb{R}^D$, formally expressed as $\boldsymbol{x}^* = {\rm argmin}_{\boldsymbol{x}\in\mathcal{X}\subset\mathbb{R}^D} f(\boldsymbol{x})$. However, in black-box optimization scenarios, the expression of the objective function is unknown~\cite{boreview}, where gradient-based optimization techniques are ineffective. In contrast, the derivative-free optimization methods~\cite{elearning,boreview,optbook2025} can handle these complex optimization problems. 

Bayesian optimization (BO)~\cite{gpucb,boreview,bobook} is one of the most valuable derivative-free optimization methods for its excellent performance. However, the curse of dimensionality has persistently remained a critical issue in BO~\cite{boreview,rebo,HighGP,HBO}. As the dimension increases, more evaluations of the objective function are required to adequately explore the solution space, making BO challenging to locate the optimal solution efficiently within finite computational resources. 

Random embedding (RE)~\cite{reboijcai,rebo,sre} is an approach to addressing high-dimensional optimization tasks with effective dimension. By constructing a mapping between a lower-dimensional subspace and the original search space via an embedding matrix, optimization can be performed in the low-dimensional subspace, thus alleviating the scalability issue for BO methods. 
This approach exploits the inherent effective dimension of high-dimensional optimization tasks, where the function value is influenced by only a few relevant dimensions. By focusing on these dimensions, random embedding significantly enhances the efficiency and performance of high-dimensional BO within limited resources. 
However, a critical challenge is how to accurately determine the effective dimension.
Selecting an appropriate low-dimensional subspace to perform optimization is crucial for the efficacy of embedding technique: a too-small subspace fails to adequately capture the effective dimension of the function, resulting in a loss of optimal solution, while a too-large subspace diminishes the advantages of the reduced dimensionality provided by random embedding.

However, previously in practice, the dimensionality of the low-dimensional subspace is typically determined via heuristic or trial-and-error approaches. 
These methods require performing multiple optimizations across various subspaces to achieve a more effective solution, which incurs significant computational costs. Currently, there is no direct method to predict the effective dimension required to capture the fundamental characteristics of the target function~\cite{baxus}. 

\textbf{Problem.}
When using BO with RE for high-dimensional optimization tasks with unknown effective dimension, how to automatically determine the appropriate subspace dimension during optimization?

\textbf{Contribution.}
To address the aforementioned problem, this paper proposes an effective approach to high-dimensional BO by developing a dynamic shared embedding Bayesian optimization (DSEBO) algorithm, which can automatically expand subspace dimension to handle tasks with unknown effective dimension.
The shared embedding technique is introduced to leverage solutions from a low-dimensional subspace within a higher-dimensional subspace, thereby better guiding the initialization of the high-dimensional subspace and accelerating convergence. 
Based on the technique, DSEBO starts from a lower-dimensional subspace, dynamically determines the next subspace dimension according to the convergence of solutions in different subspaces, and facilitates transitions between various subspaces. The theoretical analysis derives a regret bound for DSEBO, highlighting its ability to balance approximation and optimization errors more effectively than GPUCB. Experiments on high-dimensional synthetic functions and real-world tasks show the effectiveness and superiority of DSEBO. The hyper-parameter experiments illustrate the robustness of DSEBO.

The subsequent sections present the related work and preliminaries, describe the proposed DSEBO method, show the theoretical and empirical results, and conclude the paper.


\section{Related Work}

This section reviews the related work: the high-dimensional optimization algorithms, and the multi-armed bandit (MAB) algorithms used for the subspace dimension selection. 

\subsection{High-Dimensional Optimization Algorithm}

Embedding-based methods define a low-dimensional effective subspace where function values change dramatically, allowing optimization algorithms to operate within the subspace for lower computational costs. 
Linear embedding methods map low-dimensional points to high-dimensional space via an embedding matrix. Notable works include REMBO~\cite{rebo}, a foundational method, and SREBO~\cite{sre}, which extends REMBO to sequential settings. HesBO~\cite{hesbo} uses hash matrices for linear mapping, SIRBO~\cite{sirbo} employs sliced inverse regression to identify the effective dimension, and ALEBO~\cite{alebo} provides a comprehensive analysis of the embedding process. 
Using nested random subspaces to dynamically increase the optimization space from a low dimensionality to a high dimensionality approaching $D$, BAxUS~\cite{baxus} can handle high-dimensional tasks. 
Nonlinear embedding methods use complex functions to map subspaces to the original space, like VAEBO~\cite{vaebo}, which employs unsupervised learning to train variational encoders. 

Apart from embedding-based methods, there are high-dimensional optimization methods that do not rely on subspace embedding. In the BO field, one method is TuRBO~\cite{turbo}, which handles high-dimensional tasks by dividing the optimization space into smaller regions for local optimization. SAASBO~\cite{saasbo} identifies and optimizes only on a few important dimensions. DuMBO~\cite{dumbo} relaxes additive structure constraints by using decentralized message-passing and a refined acquisition function. MCTS-VS~\cite{mcts-vs} employs Monte Carlo tree search to iteratively select a subset of variables and optimize them within a low-dimensional subspace. RDUCB~\cite{rducb} uses random tree decompositions to build additive Gaussian process (GP) models, leveraging cycle-free pairwise-dimensional interactions for optimization.
Recently, SBO-SE~\cite{sbose} and VBO~\cite{VBO} introduce robust initialization strategies for the length-scale of GP kernel, addressing the vanishing gradient in high-dimensional BO.
In the field of evolutionary algorithms, LMMAES~\cite{lmmaes} approximates the covariance with a small set of evolution paths. DCEM~\cite{dcem} proposes a differentiable cross-entropy method using a smooth top-k operation.

All these methods rely on special assumptions, such as effective dimension. The failure to meet these underlying assumptions can lead to a significant decline in performance.

\subsection{Multi-Armed Bandit}
The problem of selecting subspace dimension can be modeled as an MAB problem. Specifically, different candidate dimensions are treated as different arms in the MAB problem, where the convergence value of the optimized function on these dimensions serves as the reward for each arm. 

The MAB problem~\cite{softmax,ts} centers on the trade-off between exploration and exploitation. For instance, Softmax~\cite{softmax} employs the exponential probability rule for arm selection. Thompson Sampling (TS)~\cite{ts} is a probabilistic model-based approach that samples arms according to their posterior distributions. Additionally, other algorithms like Extreme, Random, $\epsilon$-Greedy, Expectation, Upper Confidence Bound (UCB), and the UCB-E algorithm, which extends the traditional UCB algorithm, are also widely adopted methods in practice.


\section{Preliminaries}

This section briefly introduces Bayesian optimization (BO), the optimal $\epsilon$-effective dimension, and random embedding to explain the necessary preliminary knowledge and notation. 


\begin{figure*}[t]
  \centering
    \includegraphics[width=0.9\linewidth]{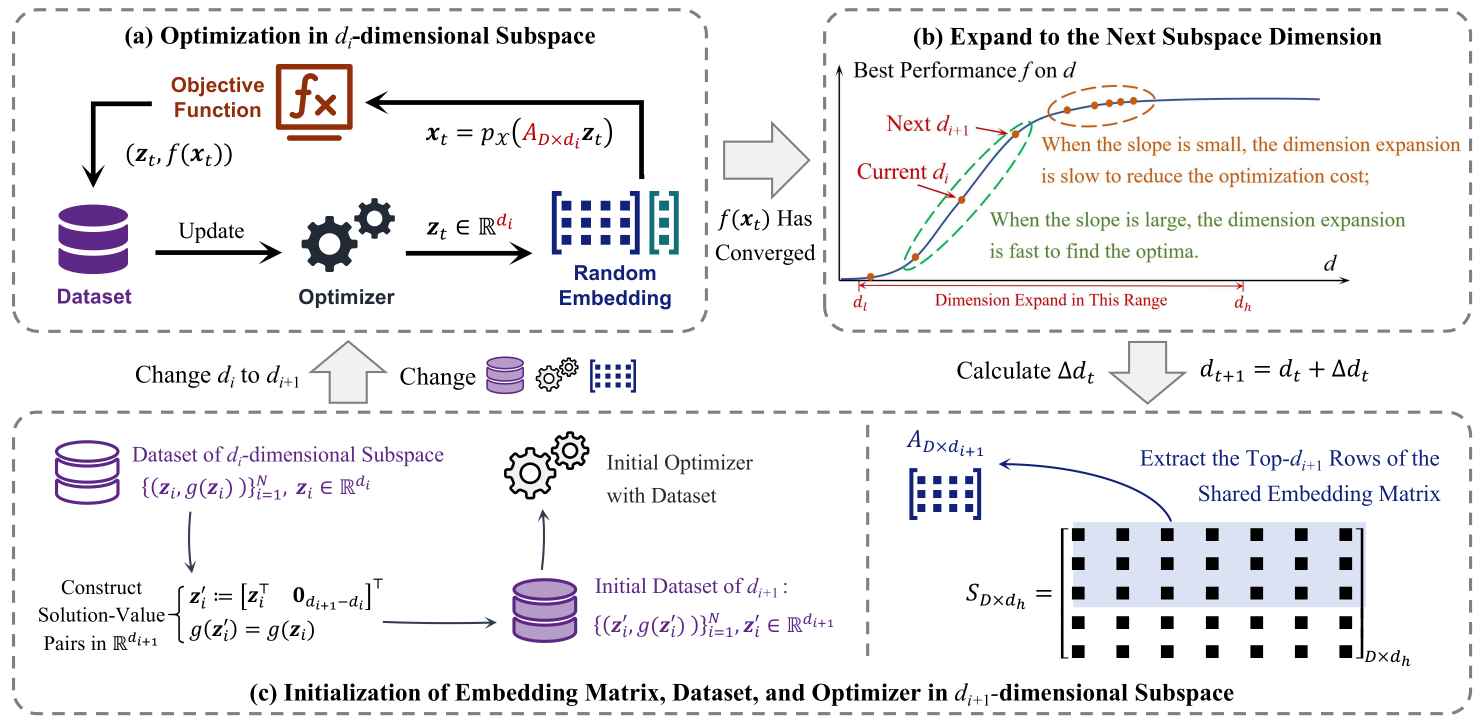}
  \caption{The framework of DSEBO. Subplot (a) shows BO with random embedding, where optimization occurs in a $d_i$-dimensional subspace, and solutions are mapped to the high-dimensional space through random embedding. Subplot (b) shows dynamic dimension expanding, which expands the subspace dimension to achieve an improved evaluation while keeping the dimension not too high. Subplot (c) shows the process of initializing a new subspace, extending the low-dimensional dataset, initializing the optimizer, and sharing the embedding matrix.}
  \label{fig:dsebo}
\end{figure*}

\textbf{Bayesian Optimization.} BO~\cite{boreview,bobook} is a well-known derivative-free optimization method that strategically leverages prior knowledge to guide optimization. BO first constructs a surrogate model of the objective function, typically a Gaussian process, based on the observed dataset $\mathcal{D}$. With the model, BO estimates the posterior distribution and calculates an acquisition function to balance ``exploration'' and ``exploitation'' in the search space $\mathcal{X}\subset\mathbb{R}^D$, thus determining the candidate $\x \in \mathcal{X}$. A common acquisition function is UCB~\cite{gpucb}, defined as $\alpha_{\rm UCB}(\x) = \mu_t(\x) + \sqrt{\kappa_{t+1}}\sigma_t(\x)$, where $\mu_t(\x)$ estimates the objective function and $\sigma_t(\x)$ represents model uncertainty. The hyper-parameter $\kappa$ controls the trade-off between ``exploration'' and ``exploitation'' for efficient optimization.
Details are provided in the Appendix~A.

%


\textbf{Random Embedding.} Random embedding~\cite{rebo,reboijcai,hesbo} is a popular subspace embedding method, which can achieve dimensionality reduction when the high-dimensional optimization tasks have effective dimension (i.e., the function value is influenced by only a few relevant dimensions). However, it is challenging to meet the effective dimension assumption in real tasks. In this case, \citeauthor{sre}~[\citeyear{sre}] propose the concept of optimal $\epsilon$-effective dimension to relax the assumption, defined as: 

\begin{definition}[Optimal $\epsilon$-Effective Dimension~\cite{sre}]\label{epsilon_Eff}
For any $\epsilon>0$, a function $f \colon \mathbb{R}^D \rightarrow \mathbb{R}$ is said to have an $\epsilon$-effective subspace $\mathcal{V}_{\epsilon}$, if there exists a linear subspace $\mathcal{V}_{\epsilon} \subseteq \mathbb{R}^D$, s.t. for all $\x \in \mathbb{R}^D$, we have $|f(\x)-f(\x_{\epsilon})| \leq \epsilon$, where $\x_{\epsilon} \in \mathcal{V}_{\epsilon}$ is the orthogonal projection of $\x$ onto $\mathcal{V}_{\epsilon}$. Let $\mathbb{V}_{\epsilon}$ denote the collection of all the $\epsilon$-effective subspaces of $f$, and $\text{dim}(\mathcal{V})$ denote the dimension of $\mathcal{V}$. We define the optimal $\epsilon$-effective dimension of $f$ as $d_\epsilon=\min_{\mathcal{V}_{\epsilon}\in \mathbb{V}_{\epsilon}}{\text{dim}(\mathcal{V}_{\epsilon})}$.
\end{definition}

For a high-dimensional optimization problem $\x^*=\text{argmin}_{\x\in\mathcal{X}\subset\mathbb{R}^D} f(\x)$, a random matrix $A\in\mathbb{R}^{D\times d}$ with elements sampled from a Gaussian distribution $\mathcal{N}(0,\sigma^2)$ embeds the $d$-dimensional subspace $\mathcal{Z}\subset\mathbb{R}^d$ into the $D$-dimensional search space $\mathcal{X}\subset\mathbb{R}^D$. 
The optimizer only needs to optimize $\z\in\mathcal{Z}$ in the low-dimensional subspace, and then embed $\z$ into $\mathcal{X}$ through the matrix $A$ to get the solution $\x=p_{\mathcal{X}}(A\z)$. 
Here, $p_{\mathcal{X}}(A\z)={\rm arg min}_{\x\in\mathcal{X}} \Vert \x - A\z\Vert_{2}$ is to project illegal points (i.e. $A\z\notin\mathcal{X}$) back to the $\mathcal{X}$ region.  
Subsequently, the objective function $f$ is evaluated in the solution $\x$, and the sample point $(\z,f(p_{\mathcal{X}}(A\z)))$ is added to the subspace dataset. The optimizer uses the updated dataset to complete the next optimization iteration. 
See Appendix~A for details.

\section{The Proposed Method}

This section introduces the proposed dynamic shared embedding Bayesian optimization (DSEBO), to handle optimization tasks with unknown effective dimension. 

\subsection{Overview}

To automatically identify the appropriate subspace for high-dimensional BO tasks, the DSEBO algorithm is proposed, illustrated in Figure~\ref{fig:dsebo} with pseudo-code in Appendix~B.

As shown in Figure~\ref{fig:dsebo}, DSEBO operates in three stages: (a) optimizing in the $d_i$-dimensional subspace, (b) expanding to a new subspace, and (c) initializing the new subspace. 
DSEBO starts with a lower dimension $d_l$, iteratively optimizes within the subspace until convergence (the convergence criteria will be introduced later), then expands to a higher-dimensional subspace to continue optimization. By gradually increasing and optimizing each subspace, the objective function is optimized while dynamically expanding the subspace dimension. 
%
DSEBO considers the differences in the optimal solutions obtained from each subspace when determining dimensionality changes. It also adjusts the scale of changes during optimization to adapt to functions with different dimensionalities. Besides, DSEBO uses a shared embedding matrix to share evaluation data among different subspaces, allowing data points from low-dimensional subspace to provide better initialization for new subspace, thus conserving the budget.

The following section will explain how data are shared between different subspaces and how dynamic strategy is designed to determine the next subspace dimension.

\subsection{Dataset Initialization with Shared Embedding}

Optimization is performed sequentially on multiple subspaces while dynamically expanding dimensions. 
However, independent random matrix mappings prevent sharing sampling points across different dimensional subspaces.
%
Therefore, a proposed shared embedding technique enables the utilization of identical sampling points across different subspaces.


The shared embedding technique maintains a shared matrix $S\in\mathbb{R}^{D\times d_h}$, where $D$ is the dimension of the search space and $d_h$ is the dimension of the largest subspace. Each subspace uses a part of this matrix to embed solutions into the search space. Specifically, for a subspace of dimension $d$, the first $d$ rows of the matrix $S$ form the embedding matrix $A_{d}$. This ensures that for any two subspaces $\mathcal{V}_i$ and $\mathcal{V}_j$ with dimensions $d_i$ and $d_j$ (assuming $d_i < d_j$), $A_{d_i}$ and the first $d_i$ rows of $A_{d_j}$ are identical. Thus, $\mathcal{V}_i$ is a subspace within $\mathcal{V}_j$, allowing $\mathcal{V}_j$ to share sampled data points from $\mathcal{V}_i$. 

Figure~\ref{fig:se} illustrates the shared embedding process across different subspaces. A 3-dimensional vector $\z \in \mathbb{R}^3$ is expanded to 5-dimensional $\z' \in \mathbb{R}^5$ by appending zeros. These vectors are then transformed into the search space $\mathbb{R}^D$ using matrices $A_3$ and $A_5$, respectively, from the shared embedding matrix $S$. After embedding, the solutions $\x$ and $\x'$ in the search space are identical, allowing 
them to share the same evaluation, i.e., $f(A_5\z') = f(A_3\z)$. 

\begin{figure}[t]
  \centering
    \includegraphics[width=1.0\linewidth]{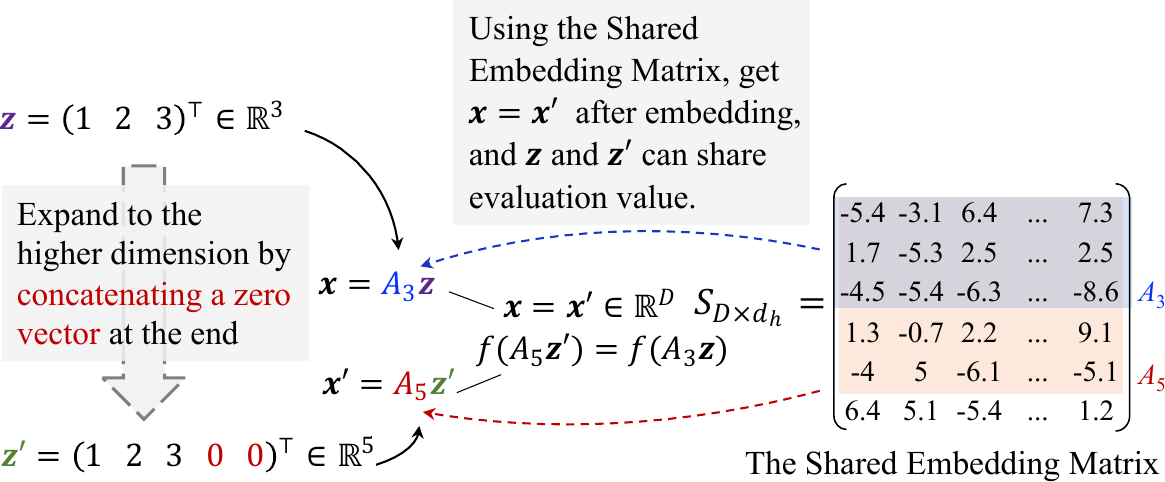}
  \caption{An example of the shared embedding technique. }
  \label{fig:se}
\end{figure}

Based on the method, each new subspace uses data from the existing lower-dimensional subspace to initialize its dataset. The initialization algorithm is shown in Appendix~C. In the first iteration $t=1$, the dataset is empty and initialized with a random point. For dimension expansion, the current dataset $\mathcal{D}^{(t)}$ is generated from the previous dataset $\mathcal{D}^{(t-1)}$ by appending zeros to each solution vector $\z$ to expand it to the new subspace while keeping the original evaluation value $y$.

Through the proposed shared embedding technique, higher-dimensional subspaces can reuse the evaluated data in lower-dimensional subspaces, since the function values of corresponding solutions remain invariant across subspaces due to the connection between subspace embedding matrices.

\subsection{Dynamic Dimension Expanding Strategy}

While the shared embedding technique enables new subspaces to reuse data from lower-dimensional subspaces, identifying optimal switching timing and targets remains challenging.

\begin{algorithm}[t]
	\renewcommand{\algorithmicrequire}{\textbf{Input:}}
    \renewcommand{\algorithmicensure}{\textbf{Output:}}
	\caption{Dynamic Dimension Expanding Strategy}
	\label{alg:updateD}
	\begin{algorithmic}[1]
    \REQUIRE Current dimension $d^{(t)}$, dimension range $[d_l, d_h]$
    \IF{$t \leq 2$}
        \STATE $\Delta d^{(t)} = \lfloor \frac{d_{h}-d_{l}}{\beta} \rfloor$
    \ELSE
        \STATE Compute the $s_i = -\frac{b_{i+1} - b_{i}}{d_{i+1} - d_{i}}$, $i\in\{1, 2, \ldots, n-1\}$
        \STATE $s_{\rm min}, s_{\rm max} \leftarrow \min_{i=1}^{n-1}s_i, \max_{i=1}^{n-1}s_i$
        \IF{$s_{\rm min}=s_{\rm max}$}
            \STATE $\Delta d^{(t)} \leftarrow \Delta d^{(t-1)}$
        \ELSE
            \STATE $\Delta d^{(t)} \leftarrow \lfloor k \Delta d^{(t-1)}\rfloor$, where $k = \frac{s_{n-1}-s_{\rm min}}{s_{\rm max}-s_{\rm min}} + 0.5$
        \ENDIF
    \ENDIF
    \STATE $d^{(t+1)}=\min(d^{(t)}+\Delta d^{(t)}, d_{h})$
	\ENSURE The next dimension $d^{(t+1)}$. 
	\end{algorithmic}
\end{algorithm}

Intuitively, the subspace dimension should be updated when the current subspace converges. Specifically, if the optimal value in the subspace remains unchanged for $T$ times, the process can be considered converged. The initial value of $T$ is set to $\lfloor budget / 2\beta \rfloor$, controlled by a hyper-parameter $\beta$. 
To determine whether optimization has converged at the current dimension, $T$ is updated as:
$T = \lfloor (1 + (d^{(t)}-d_l) / (d_h - d_l)) \cdot budget / 2\beta \rfloor$,
where $d^{(t)}$ is the current subspace dimension, $d_l$ and $d_h$ are the lower and upper bounds of the dimension range. 
As $d^{(t)}$ increases, so does $T$. This aligns with the intuition that higher-dimensional subspaces demand more effort to converge.
%
Based on the definition of $T$, when to update the subspace dimension is solved.
Another problem is how much larger the dimension of the new subspace should be.

According to the relationship between subspace dimensions and convergence values, a dynamic dimension expanding strategy is designed to select the next subspace dimension within the specified range $[d_l, d_h]$. Starting from a small dimension $d_l$, the strategy dynamically determines the subsequent dimension, and ultimately identifies a subspace where it can converge to an improved solution without excessively larger dimension.
To achieve this, the dimensions of the optimized subspace and the corresponding optimal values are recorded as $d_i$ and $b_i$, where $i$ indicates the $i$-th optimized subspace, with $i\in\{1, 2,\ldots, n\}$ representing that $n$ subspaces have been optimized. 
%
Based on $d_i$ and $b_i$, the dynamic dimension expanding strategy is designed as Algorithm~\ref{alg:updateD}. 
This strategy determines the next dimension at the $t$-th iteration by calculating the dimension change $\Delta d^{(t)}$.
Initially, if fewer than two subspaces have been optimized, $\Delta d^{(t)}$ is set to $\lfloor (d_{h}-d_{l})/\beta \rfloor$, as stated in lines 1--2. 
Otherwise, the slopes of $b_i$ with respect to $d_i$ are calculated according to:
$s_i = -(b_{i+1} - b_{i})/(d_{i+1} - d_{i})$,
where $i \in \{1, 2, \ldots, n-1\}$.
This equation measures the rate at which the optimal value improves as the dimensionality of the optimized subspace increases, quantifying the improvement of expanding the subspace.
$s_i$ are calculated for minimum optimization and should be inverted for maximum.

Next, a proportion value $k$ is used to determine the change in the current dimension, defined by the formula:
$k = (s_{n-1}-s_{\rm min})/(s_{\rm max}-s_{\rm min}) + 0.5$,
where $s_{\rm min}=\min_{i=1}^{n-1}s_i$ and $s_{\rm max}=\max_{i=1}^{n-1}s_i$. 
According to the equation, the most recent slope is min–max normalized to the range $[0.5, 1.5]$ and then used to determine the change in the current dimension. The value of $k$ reflects the impact of increasing the subspace dimension on the convergence value in the current context. If the value of $k$ is within the range $[1.0, 1.5]$, it indicates a significant impact of the dimension on the convergence value; otherwise it suggests a lesser effect.
DSEBO then dynamically scales the change in dimension $\Delta d^{(t-1)}$ from the previous iteration using the value of $k$, yielding the current dimension change $\Delta d^{(t)}$, as described in line 9. Subsequently, based on this dimension change, the new dimension is calculated as $d^{(t+1)}=\min(d^{(t)}+\Delta d^{(t)}, d_{h})$, as shown in line 12. 

We would like to emphasize that, unlike BAxUS~\cite{baxus}, which increases dimensions by splitting existing dimensions and eventually optimizes in the full space, DSEBO adopts the proposed dynamic dimension expanding strategy. Specifically, BAxUS relies on an exponential expansion  with a manually specified dimensionality growth rate, whereas DSEBO automatically and nearly linearly expands the subspace dimension within a controllable range $[d_l, d_h]$, leading to lower computational costs and better scalability.


\section{Theoretical Analysis}\label{sec_analysis}
In this section, we present the regret analysis of the naive GPUCB algorithm and DSEBO. To begin, we define $\epsilon(d)$ as the approximation error associated with the $d$-dimensional subspace, formally expressed as  
$\epsilon(d) = \min_{\x \in \mathbb{R}^d} f(\Phi(\x)) - \min_{\z \in \mathbb{R}^D} f(\z)$,
where $\Phi: \mathbb{R}^d \to \mathbb{R}^D$ denotes the embedding that maps a low-dimensional point into the original $D$-dimensional search space. Using this notation, we derive the simple regret bound for the naive GPUCB algorithm.  

Here, simple regret is defined as  
$r_f(T) = \min_{t=1}^T f(\Phi(\x_t)) - \min_{\z \in \mathbb{R}^D} f(\z)$,
which measures the difference between the best function value found by the algorithm (after embedding the low-dimensional solution back into the high-dimensional space) and the true global optimum.
\begin{theorem}\label{thm_gpucb}
    Suppose that the search space $\mathcal{X}$ is compact and convex with dimension $d$, and every $\x \in \mathcal X$ satisfies $\|\x\|_\infty \leq b$. For GP sample paths $f$ with an RBF kernel, choose $\delta \in (0,1)$, and define  
    $
    \beta^{(t)} = 2\log\left(t^2 2\pi^2 / 3\delta\right) + 2D\log\left(t^2 d b r \sqrt{\log\left(4D a / \delta\right)}\right),
    $  
    where \( a \) and \( b \) are constants satisfying $
    \textnormal{Pr}\left(\sup_{\x \in \mathcal{X}} \left|\partial f / \partial x_j\right| > L\right) \leq a e^{-(L/b)^2}$.
    By running the GPUCB algorithm with $\beta_t$ using an initialization with a zero mean function and a covariance function \( k(\x, \x') \), the simple regret is bounded by $
    O^*\left(2\epsilon(d) + \sqrt{(\log T)^{d+1} / T}\right)$,
    where \( O^* \) omits logarithmic factors.  
\end{theorem}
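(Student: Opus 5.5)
The proof will split the simple regret into an approximation term and an optimization term, and then bound the optimization term with the standard GPUCB analysis of Srinivas et al.\ for compact domains with Lipschitz-type sample paths.

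\textbf{Decomposition.} Write $g(\x) = f(\Phi(\x))$ for the function seen by the optimizer on the $d$-dimensional domain $\mathcal{X}$. Let $\x^\star_d$ be a minimizer of $g$ over $\mathcal{X}$. Then
\[
r_f(T) = \Bigl(\min_{t\le T} g(\x_t) - g(\x^\star_d)\Bigr) + \Bigl(g(\x^\star_d) - \min_{\z\in\mathbb{R}^D} f(\z)\Bigr).
\]
By definition, the second bracket is at most $\epsilon(d)$. I will state the bound with $2\epsilon(d)$ for two reasons. First, $g$ need not be exactly a GP sample path on $\mathcal{X}$ when $\Phi$ involves the projection $p_{\mathcal{X}}$. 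Second, via Definition~\ref{epsilon_Eff}, $g$ is within $\epsilon(d)$ uniformly of its restriction to the effective subspace. Applying the $\epsilon$-perturbation once when comparing observed values and once at the optimum gives the factor $2$.

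\textbf{Optimization term.} Treat $g$ as a GP sample path with an RBF kernel on the compact convex set $\mathcal{X}\subset[-b,b]^d$. The derivative tail assumption $\Pr(\sup|\partial f/\partial x_j|>L)\le a e^{-(L/b)^2}$ transfers to $g$ through the linear map, up to constants absorbed in $r$. I will then follow Theorem~2 of Srinivas et al.\ step by step:
\begin{enumerate}
\item Union bound over the partial derivatives, so that with probability $1-\delta/2$ the function $g$ is $L$-Lipschitz with $L=b\sqrt{\log(4Da/\delta)}$.
\item Discretize $\mathcal{X}$ at each round $t$ with a grid of $\tau_t^{d}$ points, where $\tau_t = t^2 d b r\sqrt{\log(4Da/\delta)}$. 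This makes the discretization error at most $1/t^2$.
\item Combine the confidence bound on the grid and at the queried points, with $\pi^2/6$ summation weights. This produces exactly the stated $\beta^{(t)}$.
\item Deduce the cumulative regret bound $R_T \le \sqrt{C_1 T\beta^{(T)}\gamma_T} + \pi^2/6$ from the instantaneous regret bounds $r_t\le 2\sqrt{\beta^{(t)}}\sigma_{t-1}(\x_t)+1/t^2$ and the information-gain inequality $\sum_t \sigma_{t-1}^2(\x_t)\le C\gamma_T$.
\end{enumerate}
Simple regret is at most $R_T/T$.

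\textbf{Information gain.} For the RBF kernel in dimension $d$, use the known bound $\gamma_T = O((\log T)^{d+1})$. Since $\beta^{(T)}$ is logarithmic in $T$, this gives $\sqrt{\beta^{(T)}\gamma_T/T} = O^*\bigl(\sqrt{(\log T)^{d+1}/T}\bigr)$. Adding the approximation part yields the claim.

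\textbf{Main obstacle.} The hard step is justifying that $g=f\circ\Phi$ inherits the GP prior and the derivative tail condition on the $d$-dimensional domain, so that $\gamma_T$ depends on $d$ rather than $D$. For the linear part $A\z$ this is immediate: the composed kernel is again RBF-like with a transformed lengthscale. The projection $p_{\mathcal{X}}$ breaks this. My first attempt will handle it by charging the discrepancy to the approximation error, which is where the second $\epsilon(d)$ is spent. The $D$ appearing in $\beta^{(t)}$ is harmless, since it only enters logarithmically.
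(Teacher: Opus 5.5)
Your skeleton matches the paper's. Its proof cites Theorem~2 of Srinivas et al.\ for cumulative regret $O^*(\sqrt{T(\log T)^{d+1}})$ on the $d$-dimensional domain, using $\gamma_T=O((\log T)^{d+1})$ for the RBF kernel and then dividing by $T$. It then adds an approximation term. Your four-step reconstruction of Theorem~2 is a faithful unpacking of that citation. The trouble is in the arguments you build around it: the factor $2$ needs no extra justification, and you only need to state the two assumptions below explicitly. As written, those surrounding arguments are wrong.

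\emph{The factor $2$.} In the paper it comes from Lemma~1 of Qian et al.: for $d\ge d_\epsilon$, almost surely every $\z\in\mathbb{R}^D$ admits an $\x\in\mathbb{R}^d$ with $|f(\z)-f(A\x)|\le 2\epsilon$. One $\epsilon$ comes from projecting $\z$ onto $\mathcal{V}_\epsilon$ and one from projecting $A\x$ onto it. That $\epsilon$ is the slack of Definition~\ref{epsilon_Eff}, not the gap $\epsilon(d)$. With $\epsilon(d)$ as literally defined, the approximation term is $\epsilon(d)$ and the $2$ is just slack. Neither of your two reasons is valid. First, $g=f\circ\Phi$ is still a GP sample path for any deterministic $\Phi$; only the RBF form of its covariance is lost. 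Second, your other reason conflates $\epsilon(d)$ with the $\epsilon$ of the definition.

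\emph{The direction of the approximation bound.} $\x^\star_d$ minimizes $g$ over the compact $\mathcal{X}$, whereas $\epsilon(d)$ takes the minimum over all of $\mathbb{R}^d$. So your second bracket is in general $\ge\epsilon(d)$, not $\le$. You need to assume that $\mathcal{X}$ contains a minimizer of $f\circ\Phi$ over $\mathbb{R}^d$. The paper assumes this tacitly, since Lemma~1 of Qian et al.\ is also unconstrained; a REMBO-style norm bound on the subspace optimizer would give it with high probability.

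\emph{The main obstacle.} Charging the effect of $p_{\mathcal{X}}$ to the approximation error cannot work. Whether Theorem~2 applies with a $d$-dimensional $\gamma_T$ depends on the covariance $k(\Phi(\x),\Phi(\x'))$ of $g$. That is not a bounded perturbation of function values, so no $\epsilon$ term can absorb it. The paper sidesteps this by imposing the hypotheses (RBF sample path, derivative tails on $\mathcal{X}$) directly on the function over the $d$-dimensional domain. If you want to derive them for $\Phi(\x)=A\x$:
\begin{itemize}
\item The composed kernel is an RBF in the metric $A^\top A$, hence isotropic after the change of variables $(A^\top A)^{1/2}\x$. This gives $\gamma_T=O((\log T)^{d+1})$ with $A$-dependent constants.
\item The derivative tails transfer through $\partial g/\partial x_j=\sum_i A_{ij}\,(\partial f/\partial z_i)(A\x)$.
\end{itemize}
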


Due to space limitations, we defer the proof in this section to Appendix~D. For DSEBO, the dimensions of the subspaces are updated periodically. Let the number of updates be $H$, and denote the subspaces in different phases as $\{\mathcal{Z}_h\}_{h=1}^H$, with dimensions $\{d_h\}_{h=1}^H$ and durations $\{T_h\}_{h=1}^H$. We then present the following theorem. Theorems~\ref{thm_gpucb} and \ref{thm_dsebo} stem from \citeauthor{gpucb}~[\citeyear{gpucb}] and \citeauthor{sre}~[\citeyear{sre}].

\begin{theorem}\label{thm_dsebo}
    Suppose each subspace $\mathcal{Z}_h$ is compact and convex, and every $\x$ in the subspaces satisfies $\|\x\|_\infty \leq b$. For GP sample paths $f$ with RBF kernel, pick $\delta \in (0,1)$ and define 
    \[
    \beta^{(t)} = 2\log\left(t^2 2\pi^2 / 3\delta\right) + 2d_h\log\left(t^2 d_h b r \sqrt{\log\left(4d_h a / \delta\right)}\right),
    \]
    where $a$ and $b$ are constants such that $
    \textnormal{Pr}\left(\sup_{\x \in \mathcal{Z}_h} \left|\partial f / \partial x_j\right| > L\right) \leq a e^{-(L/b)^2}$.
    Running DSEBO with $\beta_t$ for the initialization of a GP with mean function zero and covariance function $k(\x, \x')$, we obtain a simple regret bound of $
    O^*\left(2\sum_{h=1}^H \epsilon(d_h) T_h / T + \sqrt{\sum_{h=1}^H (\log T_h)^{d_h + 1} / T}\right)$.
\end{theorem}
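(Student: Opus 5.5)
The plan is to reduce Theorem~\ref{thm_dsebo} to a phase-by-phase application of the argument behind Theorem~\ref{thm_gpucb}, and then combine the phases through a time-weighted average. Within phase $h$, DSEBO runs GPUCB on the $d_h$-dimensional subspace $\mathcal{Z}_h$ for $T_h$ rounds. The confidence parameter $\beta^{(t)}$ is defined with the largest dimension $d_h$ (the upper end of the range). It is therefore monotone in the subspace dimension and dominates the value that Theorem~\ref{thm_gpucb} would require in every phase.

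First, I will fix the high-probability event of Srinivas et al. Take a union bound over $t$ (the $t^2 2\pi^2/3\delta$ term) and over a discretization of each $\mathcal{Z}_h$ of size $(t^2 d_h b r\sqrt{\log(4d_h a/\delta)})^{d_h}$. Together with the Lipschitz tail assumption on $\partial f/\partial x_j$, this gives $|f(\Phi(\x)) - \mu_{t-1}(\x)| \le \sqrt{\beta^{(t)}}\sigma_{t-1}(\x) + 1/t^2$ simultaneously for all $t$ and all points of every subspace, with probability at least $1-\delta$. Because the subspaces are nested through the shared embedding matrix $S$, the discretization of the largest subspace covers all of them, so a single event suffices.

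On this event, the instantaneous regret at round $t$ of phase $h$, measured against the global optimum, splits as
\[
f(\Phi(\x_t)) - \min_{\z} f(\z) = \Bigl(f(\Phi(\x_t)) - \min_{\x\in\mathcal{Z}_h} f(\Phi(\x))\Bigr) + \epsilon(d_h).
\]
The first term is bounded by $2\sqrt{\beta^{(t)}}\sigma_{t-1}(\x_t) + 1/t^2$. I expect that, as in Theorem~\ref{thm_gpucb}, the factor $2$ on $\epsilon$ enters from handling the projection $p_{\mathcal{X}}$ and the approximate optimum inside the subspace; I will carry it through exactly as in Appendix~D.

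Next, I bound simple regret by average regret: $r_f(T) \le \frac1T\sum_{t} r_t$. Summing the $\epsilon$ terms phase by phase gives $2\sum_h \epsilon(d_h)T_h/T$. For the variance terms, I apply Cauchy--Schwarz within each phase to get $\sum_{t\in\text{phase }h}\sigma_{t-1}(\x_t) \le \sqrt{T_h \sum \sigma^2_{t-1}}$. The information-gain lemma bounds this by $\sqrt{C\,T_h\gamma_{T_h}(d_h)}$, and for the RBF kernel $\gamma_{T_h}(d_h)=O((\log T_h)^{d_h+1})$. Conditioning on the shared initial data only reduces posterior variances, so using the phase-local information gain is valid, i.e.\ an upper bound.

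Finally, I combine the phases. The variance contribution is $\frac1T\sum_h \sqrt{T_h(\log T_h)^{d_h+1}}$ up to $\sqrt{\beta}$ factors, which are logarithmic and absorbed into $O^*$. Applying Cauchy--Schwarz across phases gives $\sum_h\sqrt{T_h a_h}\le\sqrt{(\sum_h T_h)(\sum_h a_h)}=\sqrt{T\sum_h a_h}$ with $a_h=(\log T_h)^{d_h+1}$. Dividing by $T$ yields $\sqrt{\sum_h(\log T_h)^{d_h+1}/T}$, as claimed. The $1/t^2$ terms sum to a constant over $T$, which is dominated.

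The main obstacle I anticipate is justifying the per-phase information-gain bound when the GP in phase $h$ is initialized with zero-padded data from earlier phases. The maximum information gain is then not over a fresh sequence. I would handle this with the monotonicity of conditional variance: extra conditioning never increases $\sigma_{t-1}$. A secondary point is making the single union bound valid uniformly across phases, which the nesting of $\mathcal{Z}_h$ under $S$ and the choice of $\beta^{(t)}$ with the largest dimension are designed to ensure.
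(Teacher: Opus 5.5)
Your proposal is correct and follows essentially the paper's route: the Srinivas et al.\ confidence bound on the per-step regret, per-phase information gains $\gamma_{T_h}=O((\log T_h)^{d_h+1})$, Cauchy--Schwarz, and a time-weighted sum of the $\epsilon(d_h)$ terms (your decomposition even gives these with factor $1$, which implies the stated factor $2$); the only differences are that the paper applies Cauchy--Schwarz once to $\sum_t (r^{(t)})^2\le C_1\beta^{(T)}\sum_h\gamma_{T_h}$ instead of per phase and then across phases, and that your variance-monotonicity argument makes explicit the phase-wise split of the information gain which the paper's step (b) uses without justification. One small point of care: a single discretization of the largest subspace suffices only if the grid point $[\x^*]_t$ rounding the phase-$h$ comparator $\x^*\in\mathcal{Z}_h$ itself lies in $\mathcal{Z}_h$ (because $\x_t$ optimizes the acquisition function only over $\mathcal{Z}_h$), so either choose the grid accordingly or union-bound over the finitely many possible subspace dimensions in $[d_l,d_h]$, which adds only a logarithmic term to $\beta^{(t)}$.
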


\textbf{Remarks.} We compare the theoretical results of DSEBO and GPUCB. It can be observed that the first term of DSEBO is larger than that of GPUCB, while the second term is smaller. This indicates that DSEBO effectively balances the trade-off between approximation error and optimization error.  
To illustrate this, assume that the approximation error satisfies \(\epsilon(d') = O^*\left((D-d') / D\right)\). Consider a common setting where \(T = 1000\), \(H = 10\), \(d = 20\), \(D = 100\), and the dimension of subspaces increasing evenly up to \(d\), substituting these values into the bound yields an approximation error ratio between DSEBO and GPUCB of \(O^*(1.25)\). This implies a modest increase in approximation error for DSEBO compared with GPUCB. However, the optimization error ratio between GPUCB and DSEBO is \(O^*(100)\), indicating a significant reduction in optimization error for DSEBO. \emph{\textbf{This implies that DSEBO sacrifices a little approximation error in exchange for a much lower optimization error to realize a lower total error (i.e., a better trade-off)}}.


\section{Experiment}
\label{exp}

This section evaluates the performance of the proposed DSEBO and shows its effectiveness and superiority through a series of experiments on both synthetic functions and real-world tasks.
The code is available in \url{https://github.com/X-Xia0828/DSEBO}. 

\begin{figure*}[t]
  \centering
  \includegraphics[width=0.9\linewidth]{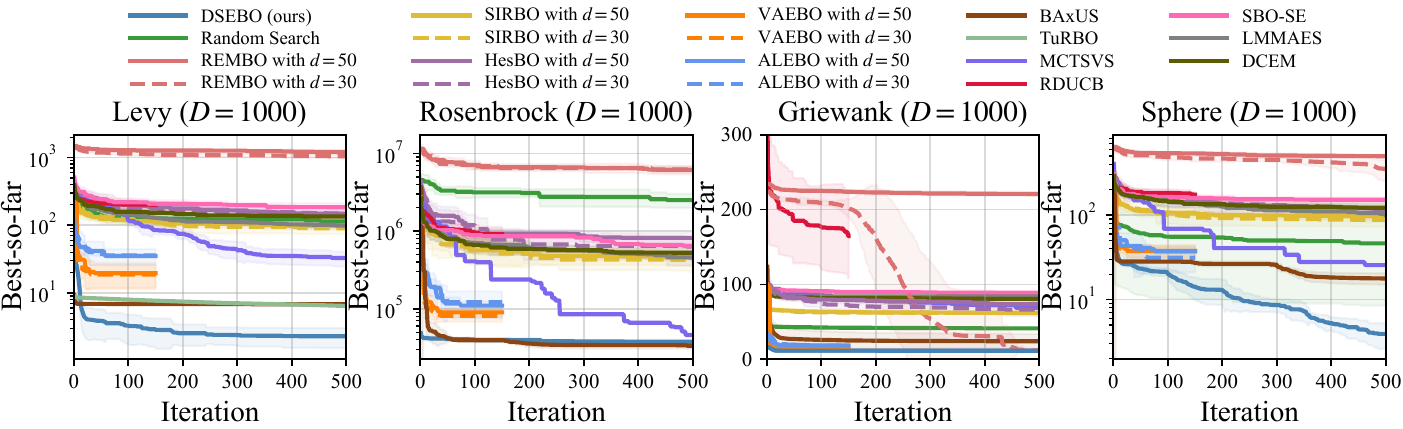}
  \caption{Results on partial synthetic functions compared with various high-dimensional optimization algorithms. All algorithms are independently repeated $10$ times. The mean and standard deviation of the results are plotted.}
  \label{fig:optASLG}
\end{figure*}

\textbf{The Setting of Synthetic Functions.} 
We construct high-dimensional objective functions based on synthetic functions meeting the optimal $\epsilon$-effective dimension (as Definition~\ref{epsilon_Eff}). Specifically, let \( f: \mathbb{R}^{d_f} \to \mathbb{R} \) be a base testing function, with its domain adjusted to \([-1, 1]^{d_f}\). The high-dimensional synthetic function \( F_c: \mathbb{R}^D \to \mathbb{R} \) is crafted to simulate the minimization of \( f \), defined by: $F_c(\x) = f(\x_{1:d_f} - \mathbf{c}) - K^{-1} \sum_{i=d_f+1}^{D} (x_i - c)^2$, where \( \x \in \mathbb{R}^D \) is the input to \( F_c \), and \( \x_{1:d_f} \) denotes the first \( d_f \) dimensions of \( \x \). The constant vector \( \mathbf{c} \in \mathbb{R}^d \), filled with the scalar \( c \), is introduced to shift the optimal solution away from the origin. The constant \( K \) modulates the influence of dimensions beyond the initial \( d_f \). Evidently, \( F_c \) possesses an optimal \( \epsilon \)-effective dimensionality \( d_f \), with \( \epsilon \leq K^{-1} \). 
In the experiments, we construct high-dimensional functions with $D=1000, 10000$ respectively, $d_e=30$ and $K=10000$, based on 6 synthetic functions from \url{http://www.sfu.ca/\textasciitilde ssurjano/optimization.html}, including Levy, Griewank, Sphere, and so on. All experiments on synthetic functions are minimum optimization problems.


\textbf{The Setting of Real-World Tasks.}
We evaluate DSEBO on three real-world datasets. 
The first dataset is the Microsoft Learning to Rank (MSLR)~\cite{mslr}, specifically the MSLR-WEB-10K version, containing over 10000 queries, each with 136 features per website page.
The second dataset is Lasso-Hard from LassoBench ~\cite{lassobench}, a 1000-dimensional optimization task designed to identify sparse regression coefficients that minimize Lasso regression loss. 
The third dataset is LIMO ~\cite{limo}, a framework for molecular generation aimed at optimizing specific properties by operating in a 1024-dimensional latent space.
Further details of these datasets are provided in the Appendix~E.
All experiments on real-world tasks are minimum optimization. 

\textbf{The Setting of DSEBO.}
DSEBO requires specifying the subspace dimension search range $[d_l, d_h]$, and a hyper-parameter $\beta$. 
For any high-dimensional problems, the recommended initial subspace dimension is $d_l=5$, and the upper boundary of the search range is $d_h = \min (D, 100)$, which is the setting for all experiments. 
Given that the performance of the Bayesian optimizer declines sharply for dimensions above $100$, the search range is capped at $100$. 
The hyper-parameter $\beta$ controls the scale of the dimension expansion, and we set $\beta=12.0$. The shared embedding matrix $S \in \mathbb{R}_{D\times d_h}$ is initialized with elements independently sampled from the Gaussian distribution $\mathcal{N}(0,\sqrt{1/d_h})$. 

\subsection{Performance of High-dimensional Optimization}

In this section, DSEBO is compared against high-dimensional optimization methods under limited resources. 
These methods include REMBO~\cite{rebo}, SIRBO~\cite{sirbo}, HesBO~\cite{hesbo}, VAEBO~\cite{vaebo}, and ALEBO~\cite{alebo}, all of which are based on subspace embedding techniques and REMBO serves as an ablated version of DSEBO without the dynamic dimension expanding strategy. 
We also compare with BAxUS~\cite{baxus}, which dynamically selects the next subspace dimension. 
For methods that do not rely on subspace embedding, we conduct TuRBO~\cite{turbo}, SAASBO~\cite{saasbo}, DuMBO~\cite{dumbo}, MCTS-VS~\cite{mcts-vs}, RDUCB~\cite{rducb}, SBO-SE~\cite{sbose}, LMMAES~\cite{lmmaes}, DCEM~\cite{dcem} and random search. 
For further details, refer to Appendix~F.
%
All methods are tested on high-dimensional optimization tasks with unknown effective dimensions, with hyper-parameters $d=30,50$ for synthetic functions and $d=50,80$ for real-world tasks.
All methods are allocated $500$ evaluation budget for optimization unless optimization either exceeds 8 hours of runtime (e.g., ALEBO, VAEBO, etc.) or encounters out-of-memory errors with 16 GB of RAM (e.g., SAASBO). All methods are independently repeated $10$ times. 

The best-so-far solutions found on partial synthetic functions and real-world datasets are shown in Figure~\ref{fig:optASLG} and~\ref{fig:optRW}. The horizontal axis represents the number of iterations, with each iteration consuming 1 budget unit. The vertical axis records the best-so-far function value.
The remaining detailed results of synthetic functions with $D=1000$, all results of synthetic functions with $D = 10000$ are shown in Appendix~G.
To further verify DSEBO's capability in handling high-dimensional optimization tasks, we also conduct experiments by adjusting the shifting scalar $c$ and the effective dimension $d_e$ of the synthetic functions, as shown in Appendix~G.

The experimental results show 
(1) \textbf{Superiority.} Compared with other high-dimensional BO algorithms, DSEBO can find optimal solutions across almost all tasks, including real-world tasks and high-dimensional synthetic functions of different magnitudes. 
(2) \textbf{Efficiency.} In high-dimensional tasks with unknown effective dimensions, DSEBO dynamically expands the subspace dimension to find the optimal solution within limited resources. This allows DSEBO to achieve the optimal solution with minimal cost, whereas subspace-based methods require multiple complete optimization processes across different subspace dimensions~\cite{rebo} and non-subspace-based methods require optimization in the original high-dimensional space, both consuming significant computational resources.
(3) \textbf{Continuous Improvement.} Due to its ability to dynamically expand subspace dimensions, the performance of DSEBO continues to improve when other algorithms have converged. When the optimization process converges in the current subspace, DSEBO can expand to a larger subspace for further optimization, ensuring continuous progress.
(4) \textbf{Adaptability.} DSEBO starts from a lower initial subspace, which makes its initial evaluation values different (such as having a good initial value on synthetic functions while getting poor initial performance on real-world datasets), but DSEBO adapts to the specific tasks and continues to converge towards the optimal solution. 
(5) \textbf{Necessity.} The experimental results show that some high-dimensional embedding BO algorithms (such as REMBO) have obvious performance differences on different subspaces. This illustrates the necessity of designing a dynamic subspace dimension expanding strategy.

\begin{figure}[t]
  \centering
  \includegraphics[width=1\columnwidth]{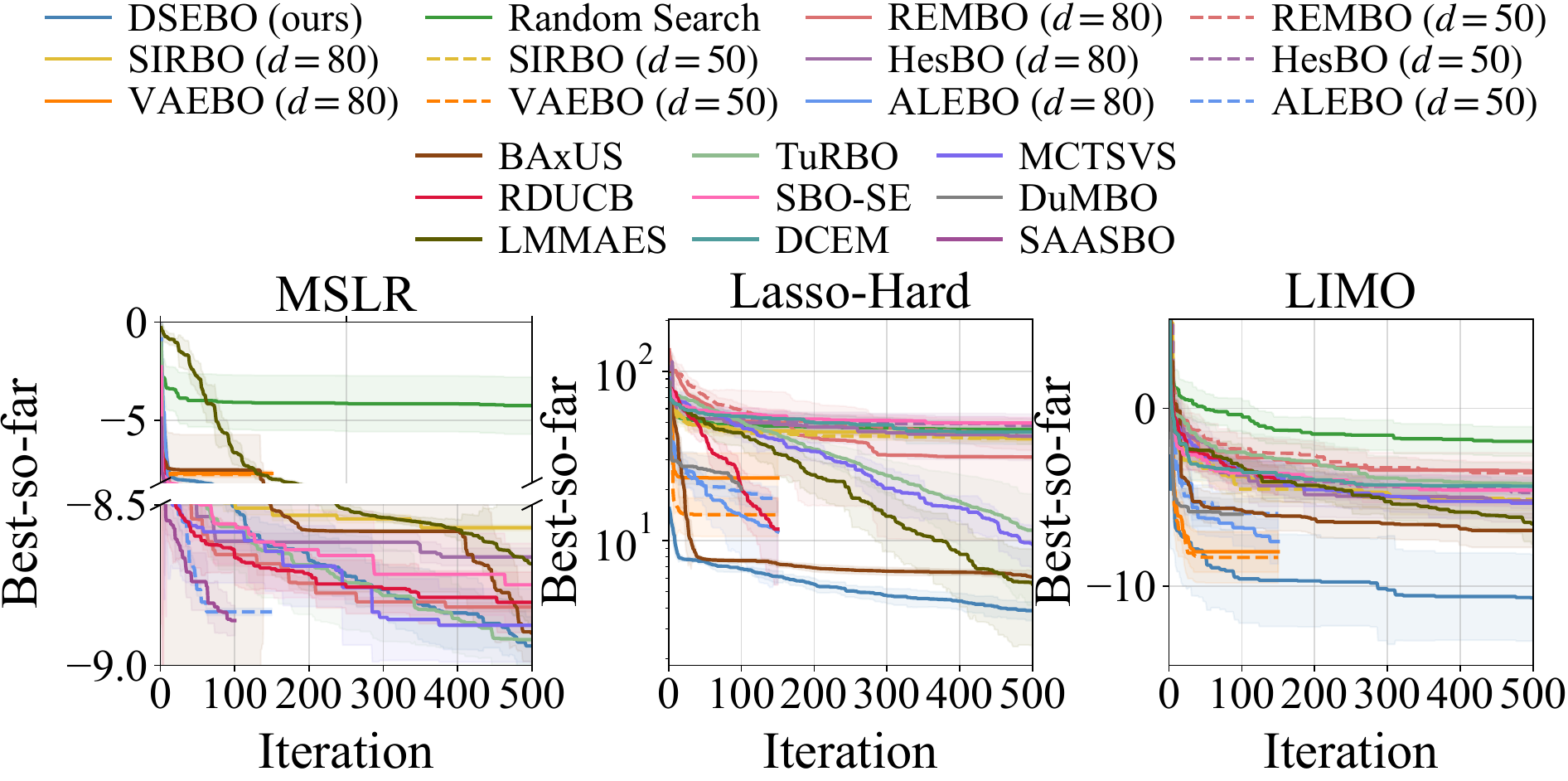}
  \caption{Results on real-world tasks compared with different high-dimensional optimization algorithms. All algorithms are run 10 times, and the mean and standard deviation are plotted.}
  \label{fig:optRW}
\end{figure}

\subsection{Performance of Dynamic Dimension Expanding}

To further evaluate the performance of the dynamic dimension expanding strategy, we compare a series of MAB strategies, including Extreme Bandits (Extreme), Classic UCB (C-UCB)~\cite{cucb}, $\epsilon$-Greedy~\cite{egreedy}, Softmax strategy~\cite{softmax}, Successive Halving (S-Halving)~\cite{shalv}, UCB-E~\cite{ucbe}, Thompson Sampling (TS)~\cite{ts}, Expectation strategy (Expectation), and Random strategy. Each MAB strategy selects different subspaces for optimization, with dimensions drawn from $\{10, 20, 30, 50, 70, 90, 100\}$. A detailed description of these algorithms is provided in the Appendix~F. All strategies are allocated a budget of $500$ evaluations, each subspace is initialized by one point, and the experiments are independently repeated $10$ times. 
%
The experimental results on real-world tasks are presented in Figure~\ref{fig:armRW}, showcasing the best-so-far solution found (top) and the subspace dimension expansions (bottom). Results for synthetic functions are provided in the Appendix~G.

Experimental results show that: 
(1) \textbf{Superiority and Tailor-Made.} DSEBO designs a dynamic dimension expanding strategy for high-dimensional BO problems with unknown effective dimensions. Compared with the general MAB strategies, it can converge to better performance on various tasks. 
(2) \textbf{Effectiveness of Shared Embedding.} Through experimental results on synthetic functions, it can be found that on tasks with particularly high dimensions, random strategy has the worst results, followed by strategies that explore larger subspace dimensions. Unlike DSEBO, other strategies cannot share data to explore different subspace dimensions. Therefore, exploration in different dimensions brings a huge overhead and reduces the convergence efficiency. 
(3) \textbf{Reasonable Expansions.} The results for four synthetic functions show that when the space dimension is below the effective dimension $d_e$, DSEBO rapidly increases it. However, once the effective dimension is exceeded, the growth rate slows down, reflecting the rationality of DSEBO in expanding subspace dimensions.

\subsection{Hyper-Parameter Analysis}

We conduct hyper-parameter analysis on $\beta$ and the upper boundary of search range $d_h$ to verify the robustness of DSEBO under different hyper-parameter settings. 
The detailed results and analysis are shown in Appendix~H.
The hyper-parameter analysis verifies that the chosen hyper-parameter values are reasonable, and shows the robustness of the DSEBO across different settings.
%
%

\begin{figure}[!t]
  \centering
  \includegraphics[width=1\columnwidth]{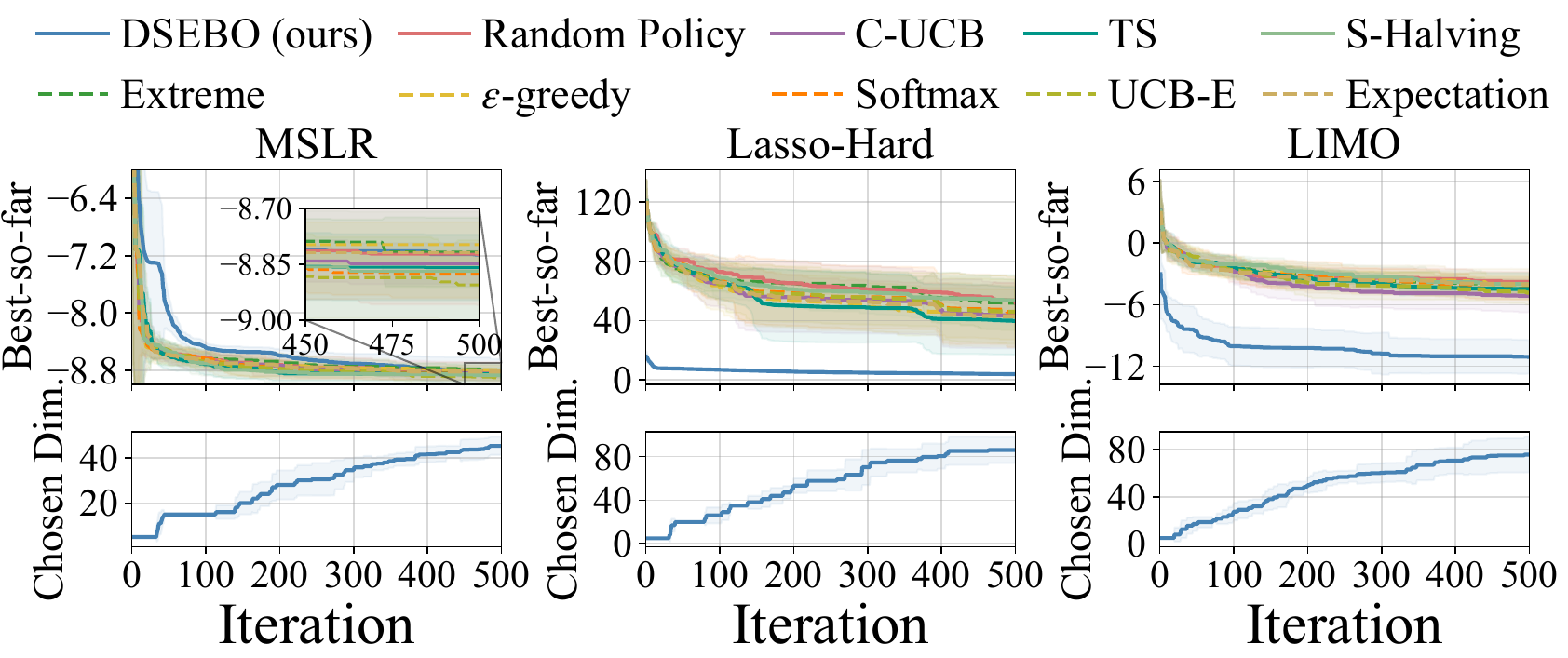}
  \caption{Results on real-world tasks compared with the MAB strategies, and the subspace dimensions selected by DSEBO during the optimization process. All experiments are repeated 10 times. The mean and standard deviation of the results are plotted.}
  \label{fig:armRW}
\end{figure}

\section{Conclusion and Discussion}
\label{conclusion}

\textbf{Conclusion.} This paper introduces the DSEBO method for automated random embedding in high-dimensional BO with unknown effective dimensions. 
We propose the dynamic shared embedding BO, which dynamically expands the subspace dimension during optimization. 
Utilizing a shared embedding matrix, one subspace can share the initial dataset from a lower-dimensional subspace. 
The dynamic dimension expanding strategy determines the dimension of the next subspace, thereby achieving better optimization performance within limited resources. 
The theoretical analysis establishes a regret bound for DSEBO, proofing its superior ability to balance approximation and optimization errors compared with GPUCB.

\textbf{Discussion.} Currently, due to the effective dimension being unknown during the optimization process, DSEBO is unable to halt dimension expansion near the effective dimension, which may lead to unnecessary increases in the subspace dimensionality. 
%
%
In the future, we will explore more adaptive strategies to identify the effective dimension and enable dynamic subspace adjustment (not just dimension expansion) and further integrate the dynamic dimension expansion strategy into other high-dimensional optimization methods, such as evolutionary algorithms, to improve optimization efficiency.
Furthermore, we will explore the integration of DSEBO with multiple random embeddings~\cite{multiRE} or learned embedding from data~\cite{trueEmb1} methods to better address high-dimensional optimization tasks with effective dimension.



\section*{Ethical Statement}

This work does not include any human subjects, personal data, or sensitive information. All testing datasets utilized are publicly accessible, and no proprietary or confidential information has been employed.

\section*{Acknowledgements}
The authors would like to thank the anonymous reviewers for their valuable and insightful comments. This work is supported by the National Key Research and Development Program of China under Grant 2024YFC3308503, the National Natural Science Foundation of China (No. 62476091) and Ant Group.

\bibliographystyle{named}
\bibliography{ijcai26}

@book{conOpt,
  title={Convex Optimization},
  author={Boyd, Stephen and Vandenberghe, Lieven},
  year={2004},
  address={Cambridge},
  publisher={Cambridge University Press}
}

@inproceedings{optApp,
  author       = {Dhagash Mehta and
                  Crina Grosan},
  title        = {A collection of challenging optimization problems in science, engineering
                  and economics},
  booktitle    = {Proceedings of the 17th {IEEE} Congress on Evolutionary Computation},
  pages        = {2697--2704},
  address      = {Sendai, Japan},
  year         = {2015}
}

@article{rl,
  author    = {Hong Qian and
               Yang Yu},
  title     = {Derivative-free Reinforcement Learning: {A} Review},
  journal   = {Frontiers of Computer Science},
  volume    = {15},
  number    = {6},
  pages     = {156336},
  year      = {2021}
}

@article{adaboost,
  author       = {Yoav Freund and
                  Robert E. Schapire},
  title        = {A Decision-Theoretic Generalization of On-Line Learning and an Application
                  to Boosting},
  journal      = {Journal of Computer and System Sciences},
  volume       = {55},
  number       = {1},
  pages        = {119--139},
  year         = {1997}
}

@article{automl,
  author       = {Thomas Elsken and
                  Jan Hendrik Metzen and
                  Frank Hutter},
  title        = {Neural Architecture Search: {A} Survey},
  journal      = {Journal of Machine Learning Research},
  volume       = {20},
  pages        = {55:1--55:21},
  year         = {2019}
}

@inproceedings{sre,
  author    = {Qian, Hong and Hu, Yi-Qi and Yu, Yang},
  title     = {Derivative-free Optimization of High-dimensional Non-convex Functions by Sequential Random Embeddings},
  booktitle = {Proceedings of the 25th International Joint Conference on Artificial Intelligence},
  address   = {New York, NY},
  pages     = {1946--1952},
  year      = {2016}
}

@book{elearning,
    author = {Zhou, Zhi-Hua and Yu, Yang and Qian, Chao},
    title = {Evolutionary Learning: {A}dvances in Theories and Algorithms},
    year = {2019},
    address = {Berlin},
    publisher = {Springer Publishing Company, Incorporated}, 
}

@article{boreview,
  author       = {Bobak Shahriari and
                  Kevin Swersky and
                  Ziyu Wang and
                  Ryan P. Adams and
                  Nando de Freitas},
  title        = {Taking the Human Out of the Loop: {A} Review of {B}ayesian Optimization},
  journal      = {Proceedings of the {IEEE}},
  volume       = {104},
  number       = {1},
  pages        = {148--175},
  year         = {2016}
}

@book{bobook,
  title={{B}ayesian Optimization},
  author={Garnett, Roman},
  year={2023},
  address={Cambridge},
  publisher={Cambridge University Press}
}

@inproceedings{gpucb,
  author       = {Niranjan Srinivas and
                  Andreas Krause and
                  Sham M. Kakade and
                  Matthias W. Seeger},
  title        = {{G}aussian Process Optimization in the Bandit Setting: {N}o Regret and Experimental Design},
  booktitle    = {Proceedings of the 27th International Conference on Machine Learning},
  pages        = {1015--1022},
  address      = {Haifa, Israel},
  year         = {2010}
}

@inproceedings{alebo,
  author       = {Benjamin Letham and
                  Roberto Calandra and
                  Akshara Rai and
                  Eytan Bakshy},
  title        = {Re-examining Linear Embeddings for High-dimensional {B}ayesian Optimization},
  booktitle    = {Advances in Neural Information Processing Systems 33},
  year         = {2020},
  address      = {Virtual Event}
}

@inproceedings{hesbo,
  author       = {Amin Nayebi and
                  Alexander Munteanu and
                  Matthias Poloczek},
  title        = {A Framework for {B}ayesian Optimization in Embedded Subspaces},
  booktitle    = {Proceedings of the 36th International Conference on Machine Learning},
  volume       = {97},
  pages        = {4752--4761},
  year         = {2019},
  address      = {Long Beach, CA} 
}

@inproceedings{sirbo,
  author       = {Miao Zhang and
                  Huiqi Li and
                  Steven W. Su},
  title        = {High Dimensional {B}ayesian Optimization via Supervised Dimension Reduction},
  booktitle    = {Proceedings of the 28th International Joint Conference on
                  Artificial Intelligence},
  pages        = {4292--4298},
  year         = {2019},
  address      = {Macao, China}
}

@article{vaebo,
  title={Automatic Chemical Design using a Data-driven Continuous Representation of Molecules},
  author={G{\'o}mez-Bombarelli, Rafael and Wei, Jennifer N and Duvenaud, David and Hern{\'a}ndez-Lobato, Jos{\'e} Miguel and S{\'a}nchez-Lengeling, Benjam{\'\i}n and Sheberla, Dennis and Aguilera-Iparraguirre, Jorge and Hirzel, Timothy D and Adams, Ryan P and Aspuru-Guzik, Al{\'a}n},
  journal={ACS Central Science},
  volume={4},
  number={2},
  pages={268--276},
  year={2018}
}

@book{softmax,
  title={Reinforcement Learning: {A}n Introduction},
  author={Sutton, Richard S and Barto, Andrew G},
  year={2018},
  address={Cambridge},
  publisher={MIT {P}ress}
}

@inproceedings{ts,
  author       = {Tianyuan Jin and
                  Pan Xu and
                  Xiaokui Xiao and
                  Anima Anandkumar},
  title        = {Finite-Time Regret of {T}hompson Sampling Algorithms for Exponential Family Multi-Armed Bandits},
  booktitle    = {Advances in Neural Information Processing Systems 35},
  pages        = {38475--38487},
  address      = {New Orleans, LA},
  year         = {2022}
}

@inproceedings{pedbo,
  author       = {Yangwenhui Zhang and
                  Hong Qian and
                  Xiang Shu and
                  Aimin Zhou},
  title        = {High-Dimensional Dueling Optimization with Preference Embedding},
  booktitle    = {Proceedings of the 37th {AAAI} Conference on Artificial Intelligence},
  pages        = {11280--11288},
  year         = {2023},
  address      = {Washington, DC}
}

@inproceedings{bbtv2,
  author       = {Tianxiang Sun and
                  Zhengfu He and
                  Hong Qian and
                  Yunhua Zhou and
                  Xuanjing Huang and
                  Xipeng Qiu},
  title        = {{BBT}v2: {T}owards a Gradient-Free Future with Large Language Models},
  booktitle    = {Proceedings of the 2022 Conference on Empirical Methods in Natural Language Processing},
  pages        = {3916--3930},
  year         = {2022},
  address      = {Abu Dhabi, United Arab Emirates}
}

@article{mslr,
  author    = {Tao Qin and
               Tie{-}Yan Liu and
               Jun Xu and
               Hang Li},
  title     = {{LETOR:} {A} Benchmark Collection for Research on Learning to Rank for Information Retrieval},
  journal   = {Information Retrieval},
  volume    = {13},
  number    = {4},
  pages     = {346--374},
  year      = {2010}
}

@inproceedings{shalv,
  author       = {Zohar Shay Karnin and
                  Tomer Koren and
                  Oren Somekh},
  title        = {Almost Optimal Exploration in Multi-Armed Bandits},
  booktitle    = {Proceedings of the 30th International Conference on Machine Learning},
  pages        = {1238--1246},
  address      = {Atlanta, GA},
  year         = {2013},
}

@inproceedings{ucbe,
  author       = {Jean{-}Yves Audibert and
                  S{\'{e}}bastien Bubeck and
                  R{\'{e}}mi Munos},
  title        = {Best Arm Identification in Multi-Armed Bandits},
  booktitle    = {Proceedings of the 23rd Conference on Learning Theory},
  pages        = {41--53},
  year         = {2010},
  address      = {Haifa, Israel},
}

@inproceedings{egreedy,
  author       = {John Langford and
                  Tong Zhang},
  title        = {The Epoch-Greedy Algorithm for Multi-armed Bandits with Side Information},
  booktitle    = {Advances in Neural Information Processing Systems 20},
  pages        = {817--824},
  year         = {2007},
  address      = {Vancouver, Canada},
}

@article{cucb,
  author       = {Peter Auer and
                  Nicol{\`{o}} Cesa{-}Bianchi and
                  Paul Fischer},
  title        = {Finite-time Analysis of the Multiarmed Bandit Problem},
  journal      = {Machine Learning},
  volume       = {47},
  pages        = {235--256},
  year         = {2002},
}

@inproceedings{turbo,
  author       = {David Eriksson and
                  Michael Pearce and
                  Jacob R. Gardner and
                  Ryan Turner and
                  Matthias Poloczek},
  title        = {Scalable Global Optimization via Local {B}ayesian Optimization},
  booktitle    = {Advances in Neural Information Processing Systems 32},
  pages        = {5497--5508},
  year         = {2019},
  address      = {Vancouver, Canada},
}

@inproceedings{baxus,
  author       = {Leonard Papenmeier and
                  Luigi Nardi and
                  Matthias Poloczek},
  title        = {Increasing the Scope as You Learn: {A}daptive {B}ayesian Optimization in Nested Subspaces},
  booktitle    = {Advances in Neural Information Processing Systems 35},
  year         = {2022},
  address      = {New Orleans, LA},
  pages        = {11586--11601},
}

@inproceedings{mcts-vs,
  author       = {Lei Song and
                  Ke Xue and
                  Xiaobin Huang and
                  Chao Qian},
  title        = {{M}onte {C}arlo Tree Search based Variable Selection for High Dimensional {B}ayesian Optimization},
  booktitle    = {Advances in Neural Information Processing Systems 35},
  year         = {2022},
  address      = {New Orleans, LA},
  pages        = {28488--28501}
}

@inproceedings{saasbo,
  author       = {David Eriksson and
                  Martin Jankowiak},
  title        = {High-dimensional {B}ayesian optimization with sparse axis-aligned subspaces},
  booktitle    = {Proceedings of the 37th Conference on Uncertainty in Artificial Intelligence},
  pages        = {493--503},
  year         = {2021},
  address      = {Virtual Event}
}

@inproceedings{lassobench,
  author       = {Kenan Sehic and
                  Alexandre Gramfort and
                  Joseph Salmon and
                  Luigi Nardi},
  title        = {Lasso{B}ench: {A} High-Dimensional Hyperparameter Optimization Benchmark Suite for {L}asso},
  booktitle    = {Proceedings of the 1st International Conference on Automated Machine Learning},
  volume       = {188},
  pages        = {2/1--24},
  address      = {Baltimore, MD},
  year         = {2022},
}

@inproceedings{limo,
  author       = {Peter Eckmann and
                  Kunyang Sun and
                  Bo Zhao and
                  Mudong Feng and
                  Michael K. Gilson and
                  Rose Yu},
  title        = {{LIMO:} {L}atent Inceptionism for Targeted Molecule Generation},
  booktitle    = {Proceedings of the 39th International Conference on Machine Learning},
  volume       = {162},
  pages        = {5777--5792},
  address      = {Baltimore, MD}, 
  year         = {2022},
}

@inproceedings{rducb,
  author       = {Juliusz Krysztof Ziomek and
                  Haitham Bou{-}Ammar},
  title        = {Are Random Decompositions all we need in High Dimensional {B}ayesian
                  Optimisation?},
  booktitle    = {Proceedings of the 40th International Conference on Machine Learning},
  volume       = {202},
  pages        = {43347--43368},
  address    = {Honolulu, HI},
  year         = {2023},
}

@inproceedings{dcem,
  author       = {Brandon Amos and
                  Denis Yarats},
  title        = {The Differentiable Cross-Entropy Method},
  booktitle    = {Proceedings of the 37th International Conference on Machine Learning},
  address      = {Virtual Event},
  volume       = {119},
  pages        = {291--302},
  year         = {2020},
}

@inproceedings{sbose,
      author={Zhitong Xu and Haitao Wang and Jeff M Phillips and Shandian Zhe},
      title={Standard {G}aussian process is all you need for high-dimensional {B}ayesian optimization}, 
      booktitle    = {Proceedings of the 13th International Conference on Learning Representations},
      year={2025},
}

@article{lmmaes,
  author       = {Ilya Loshchilov and
                  Tobias Glasmachers and
                  Hans{-}Georg Beyer},
  title        = {Large Scale Black-Box Optimization by Limited-Memory Matrix Adaptation},
  journal      = {{IEEE} Transactions on Evolutionary Computation},
  volume       = {23},
  number       = {2},
  pages        = {353--358},
  year         = {2019},
}

@inproceedings{dumbo,
  author       = {Anthony Bardou and Patrick Thiran and Thomas Begin},
  title        = {Relaxing the Additivity Constraints in Decentralized No-Regret High-Dimensional {B}ayesian Optimization},
  booktitle    = {Proceedings of the 12th International Conference on Learning Representations},
  year         = {2024},
}

@article{HighGP,
  author       = {Binois, Mickael and Wycoff, Nathan},
  title        = {A Survey on High-dimensional {G}aussian Process Modeling with Application to {B}ayesian Optimization},
  journal      = {ACM Transactions on Evolutionary Learning and Optimization},
  volume       = {2},
  number       = {2},
  pages        = {8:1--8:26},
  year         = {2022},
}

@article{HBO,
  author       = {Maria Laura Santoni and
                  Elena Raponi and
                  Renato De Leone and
                  Carola Doerr},
  title        = {Comparison of High-Dimensional {B}ayesian Optimization Algorithms on {BBOB}},
  journal      = {ACM Transactions on Evolutionary Learning and Optimization},
  volume       = {4},
  number       = {3},
  pages        = {17:1--17:33},
  year         = {2024},
}

@article{multiRE,
  author       = {Coralia Cartis and
                  Estelle M. Massart and
                  Adilet Otemissov},
  title        = {Bound-constrained global optimization of functions with low effective
                  dimensionality using multiple random embeddings},
  journal      = {Mathematical Programming},
  volume       = {198},
  number       = {1},
  pages        = {997--1058},
  year         = {2023},
}

@inproceedings{trueEmb1,
  author       = {Roman Garnett and
                  Michael A. Osborne and
                  Philipp Hennig},
  title        = {Active Learning of Linear Embeddings for {G}aussian Processes},
  booktitle    = {Proceedings of the 30th Conference on Uncertainty in Artificial
                  Intelligence},
  address      = {Quebec, Canada},
  pages        = {230--239},
  year         = {2014},
}

@book{optbook2025,
  author       = {Yang Yu and Hong Qian and Yi-Qi Hu},
  title        = {Derivative-Free Optimization: {T}heoretical Foundations, Algorithms, and Applications},
  publisher    = {Springer},
  year         = {2025}
}

@inproceedings{reboijcai,
  author       = {Ziyu Wang and
                  Masrour Zoghi and
                  Frank Hutter and
                  David Matheson and
                  Nando de Freitas},
  title        = {Bayesian Optimization in High Dimensions via Random Embeddings},
  booktitle    = {Proceedings of the 23rd International Joint Conference on Artificial Intelligence},
  address      = {Beijing, China},
  pages        = {1778--1784},
  year         = {2013},
}

@article{rebo,
  author    = {Ziyu Wang and
                  Frank Hutter and
                  Masrour Zoghi and
                  David Matheson and
                  Nando de Freitas},
  title     = {{B}ayesian Optimization in a Billion Dimensions via Random Embeddings},
  journal   = {Journal of Artificial Intelligence Research},
  volume    = {55},
  pages     = {361--387},
  year      = {2016}
}

@inproceedings{VBO,
  author       = {Carl Hvarfner and
                  Erik Orm Hellsten and
                  Luigi Nardi},
  title        = {Vanilla {B}ayesian Optimization Performs Great in High Dimensions},
  booktitle    = {Proceedings of the 41st International Conference on Machine Learning},
  volume       = {235},
  pages        = {20793--20817},
  address      = {Vienna, Austria},
  year         = {2024},
}
    
\clearpage

\appendix

This appendix of ``Automated Random Embedding for Practical Bayesian Optimization with Unknown Effective Dimension'' is organized as follows. We begin with detailed preliminaries, followed by the pseudo-code of DSEBO and the dataset initialization algorithm. We then provide proofs of the theoretical results in the main paper. Next, we describe the real-world datasets used in our experiments and the implementation details of the comparative methods. Finally, we present detailed experimental results along with a comprehensive hyper-parameter analysis.

\section{Detailed Preliminaries}
\label{appx_pre}

This section provides additional details on Bayesian optimization, the optimal $\epsilon$-effective dimension, and random embedding to further clarify the preliminaries, necessary background and notation.

\begin{figure}[h]
\begin{center}
    \subfigure[Function with effective dimension]{
    \includegraphics[width=0.45\linewidth]{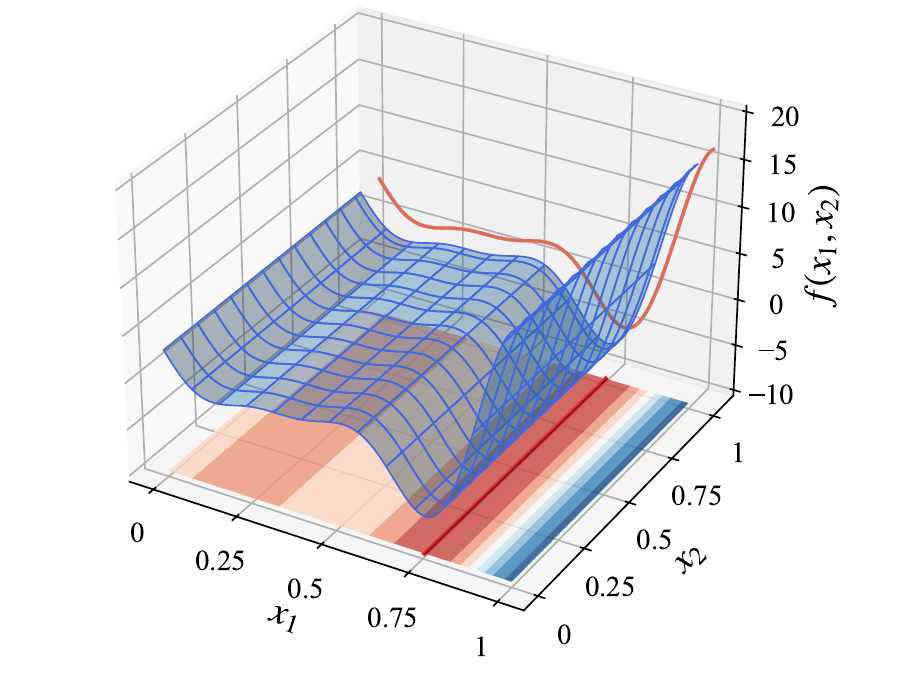}
    \label{fig:effectiveDim}
  }
  \hfill
  \subfigure[$\epsilon$-effective dimension for MSLR]{
    \includegraphics[width=0.45\linewidth]{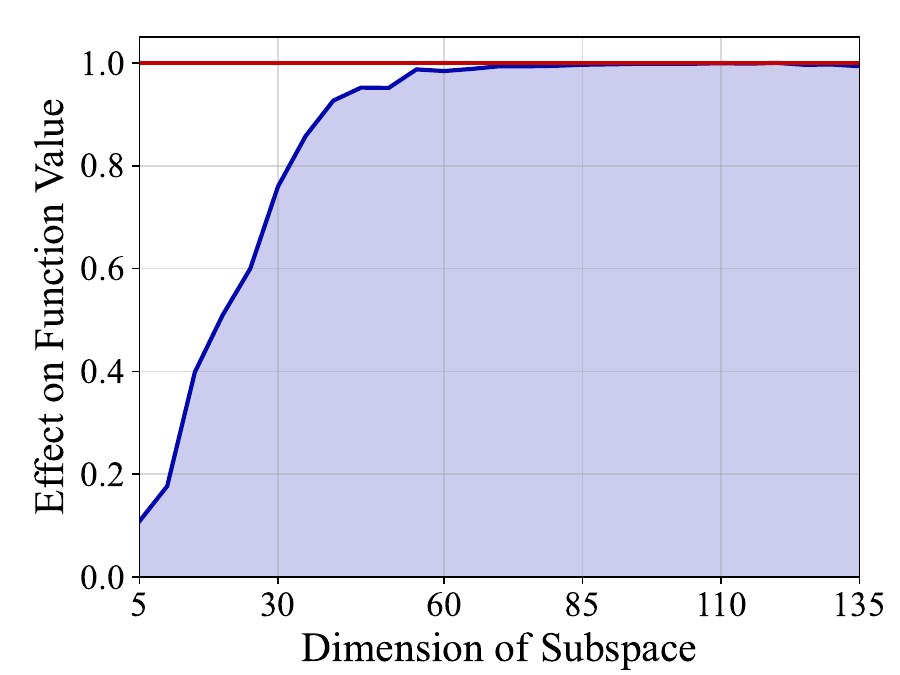}
    \label{fig:EDofMSLR}
  }
\end{center}
\vspace{-5pt}
\caption{An illustration for a synthetic function with effective dimension and the test for the existence of $\epsilon$-effective dimension in real dataset MSLR.}
\label{fig:eff}
\vspace{-5pt}
\end{figure}


\subsection{Bayesian Optimization}
Bayesian optimization (BO)~\cite{gpucb,boreview,bobook} is a well-known derivative-free optimization method that strategically utilizes prior knowledge to guide the sampling process. First, BO constructs a surrogate model, typically a Gaussian process (GP), based on samples of the objective function. Next, BO calculates the acquisition function based on this model to guide the subsequent sampling process and the trade-off between ``exploration'' and ``exploitation'' in the search space. 

Based on the observed dataset $\mathcal{D}$, BO estimates the posterior distribution of the objective function $P(f|\mathcal{D}) \propto P(\mathcal{D}|f)P(f)$, where $P(f)$ denotes the prior distribution of the objective function and $P(\mathcal{D}|f)$ is the likelihood. This posterior distribution incorporates information about the objective function and is used to inform subsequent modeling and optimization. 
Once the posterior distribution has been obtained, BO employs an acquisition function to determine the next sampling point. A common choice is UCB~\cite{gpucb}, defined as $\alpha_{\rm UCB}(\x) = \mu_t(\x) + \sqrt{\kappa_{t+1}}\sigma_t(\x)$, where $\mu_t(\x)$ estimates the objective function, associated with ``exploitation'', while $\sigma_t(\x)$ represents the uncertainty of the objective function, associated with ``exploration''. The hyper-parameter $\kappa$ in UCB balances ``exploration'' and ``exploitation'' for efficient optimization. 

\subsection{Random Embedding}

If the high-dimensional objective function has effective dimension, the subspace embedding technique can effectively achieve dimensionality reduction.  
For example, Figure~\ref{fig:effectiveDim} shows a $2$-dimensional synthetic function $f(x_1, x_2)$ with a $1$-dimensional effective dimension. 
In such cases, only a few dimensions significantly influence the objective function and are prioritized during optimization.

However, it is challenging to satisfy the effective dimension assumption in real tasks. In this case, \citeauthor{sre}~[\citeyear{sre}] propose the concept of optimal $\epsilon$-effective dimension to relax the assumption. 


\begin{figure}[!t]
\begin{center}
    \includegraphics[width=1.0\linewidth]{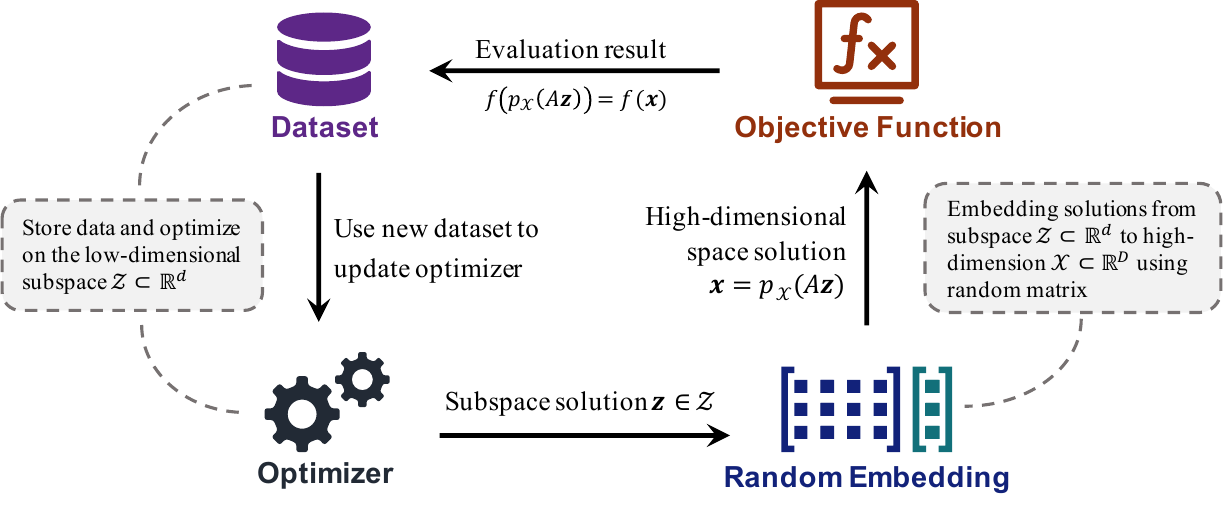}
\end{center}
\vspace{-5pt}
\caption{The process of BO with random embedding.}
\label{fig:re}
\vspace{-5pt}
\end{figure}

Empirical evidence shows that many high-dimensional tasks~\cite{bbtv2} and datasets~\cite{pedbo} follow the $\epsilon$-effective dimension. 
A typical example is the MSLR task (as Figure~\ref{fig:EDofMSLR}), where 50-dimensional subspace fully captures the variation in the objective function, and other dimensions have a low impact, i.e., the $\epsilon$-effective dimension of the MSLR dataset does not exceed 50.


Random embedding (RE)~\cite{rebo,hesbo} is a widely used subspace embedding method as shown in Figure~\ref{fig:re}. 
For a high-dimensional optimization problem $\x^*=\text{argmin}_{\x\in\mathcal{X}\subset\mathbb{R}^D} f(\x)$, a random matrix $A\in\mathbb{R}^{D\times d}$ with elements sampled from a Gaussian distribution $\mathcal{N}(0,\sigma^2)$ embeds the $d$-dimensional subspace $\mathcal{Z}\subset\mathbb{R}^d$ into the $D$-dimensional search space $\mathcal{X}\subset\mathbb{R}^D$. 
The optimizer only needs to optimize $\z\in\mathcal{Z}$ in the low-dimensional subspace, and then embed $\z$ into $\mathcal{X}$ through the matrix $A$ to get the solution $\x=p_{\mathcal{X}}(A\z)$. 
Here, $p_{\mathcal{X}}(A\z)={\rm arg min}_{\x\in\mathcal{X}} \Vert \x - A\z\Vert_{2}$ is to project illegal points (i.e., $A\z\notin\mathcal{X}$) back to the $\mathcal{X}$ region.  
Subsequently, the objective function $f$ is evaluated in the solution $\x$, and the sample point $(\z,f(p_{\mathcal{X}}(A\z)))$ is added to the subspace dataset. The optimizer uses the updated dataset to complete the next optimization iteration.

\section{Pseudo-code of DSEBO}
\label{appx_dsebo}

\begin{algorithm}[!b]
	\renewcommand{\algorithmicrequire}{\textbf{Input:}}
    \renewcommand{\algorithmicensure}{\textbf{Output:}}
	\caption{DSEBO Algorithm}
	\label{alg:dsebo}
	\begin{algorithmic}[1]

    \REQUIRE $D$-dimensional objective function $f(\cdot)$, $budget$, $d_{l}=5$, $d_{h}=\min(D, 100)$, hyper-parameter $\beta$.
    
    \STATE Initialize the shared embedding matrix $S_{D \times d_{h}}$, whose elements are sampled from $\mathcal{N}(0,\sqrt{1/d_h})$

    \STATE $t = 1, d^{(1)} = d_l, T = \lfloor \frac{budget}{2 \beta} \rfloor$
    
    \WHILE{$t <= budget$}
    
        \IF {$t = 1$ or $d^{(t)} \neq d^{(t-1)}$}
        
            \STATE $\mathcal{D}^{(t)} \leftarrow$ Initialize dataset using shared embedding

            \STATE Using $\mathcal{D}^{(t)}$ update the optimizer $B_{d^{(t)}}$
            
            \STATE $b \leftarrow {\rm min} \{ y | (\z, y) \in \mathcal{D}^{(t)} \}$
            
            \STATE $A_{d^{(t)}} \leftarrow$ The first $d^{(t)}$ rows of matrix $S_{D \times d_{h}}$
        
        \ENDIF

        \STATE $\z^{(t)} \leftarrow$ Next solution given by the optimizer $B_{d^{(t)}}$

        \STATE $\x^{(t)} \leftarrow$ Random Embedding $\z^{(t)}$, i.e., $p_{\mathcal{X}}(A_{d^{(t)}}\z^{(t)})$

        \STATE $f(\x^{(t)}) \leftarrow$ Query the objective function

        \STATE $\mathcal{D}^{(t+1)} \leftarrow \mathcal{D}^{(t)} \cup \{ (\z^{(t)}, f(\x^{(t)})) \}$

        \STATE Using $\mathcal{D}^{(t+1)}$ update the optimizer $B_{d^{(t)}}$

        \IF{$|f(\x^{(t)}) - b | > 0.5$}
            \STATE $b \leftarrow f(\boldsymbol{x}^{(t)})$
        \ENDIF
        
        \IF{$b$ has no change in $T$ iterations}
            \STATE $d^{(t+1)} \leftarrow$ Update dimension using dynamic dimension expanding strategy
            
            \STATE $T \leftarrow \lfloor(1+\frac{d^{(t)}-d_l}{d_h-d_l}) \frac{budget}{2 \beta} \rfloor$
        \ELSE
            \STATE $d^{(t+1)} \leftarrow d^{(t)}$
        \ENDIF
        
        \STATE $t \leftarrow t + 1$
    \ENDWHILE
    
    \STATE $\z^* \leftarrow \arg\max_{\z}\{y \mid (\z, y) \in \mathcal{D}^{(t)}\}$
  
	\ENSURE The best solution $\boldsymbol{x}^* = A^{(t)}\z^*$.

	\end{algorithmic}
\end{algorithm}

The pseudo-code of DSEBO is shown in Algorithm~\ref{alg:dsebo}. The algorithm aims to optimize a high-dimensional objective function $f(\cdot)$ with unknown effective dimensions on a $D$-dimensional space within a limited number of evaluations to find its minimum value. DSEBO requires $budget$ (i.e., total evaluations), the subspace dimension search range $[d_l, d_h]$, and a hyper-parameter $\beta$. For a high-dimensional problem, one can search from the dimension $5$, i.e., $d_l=5$, and the recommended value of the upper boundary of the search range $d_h$ is $\min (D, 100)$. When the dimension exceeds $100$, the performance of the Bayesian optimizer will drop sharply, so the search range of the subspace will not exceed $100$. The hyper-parameter $\beta$ controls the scale of the dimension expansion.

Before optimization, the algorithm will initialize the shared embedding matrix and set some necessary parameters, as shown in lines 1--2, where $T$ controls the judgment of whether the solution converges, indirectly controlling the number of dimension expansion iterations. 
During the optimization phase, if a subspace has not been initialized, the algorithm will initialize its dataset, optimizer, current optimal solution $b$, and random embedding matrix in sequence, as shown in lines 4--9. 
Then, the following step is Bayesian optimization based on stochastic embedding, including steps such as the optimizer giving the next solution, embedding the solution into the high-dimensional search space, querying the objective function, updating the dataset and the optimizer, as shown in lines 10--14. 
The updating of the optimal value $b$ achievable by the current algorithm occurs in lines 15--17, with the error constrained within the range of $\alpha$.
If the solution has converged in the current subspace, i.e., $b$ has not changed in $T$ iterations, expand to the new subspace and update $T$ (the convergence time $T$ of the higher-dimensional subspace solution is usually larger, so we update it), as shown in lines 18--20. 
Repeat the above steps until the $budget$ is exhausted, and finally embed the subspace into the search space to obtain the optimal value found by the algorithm.

\begin{algorithm}[!t]
	\renewcommand{\algorithmicrequire}{\textbf{Input:}}
    \renewcommand{\algorithmicensure}{\textbf{Output:}}
	\caption{Dataset Initialization}
	\label{alg:init}
	\begin{algorithmic}[1]

    \REQUIRE Previous dimension $d^{(t-1)}$, current dimension $d^{(t)}$, dataset $\mathcal{D}^{(t-1)}$ of previous space.
    
    \IF{$t = 1$}
        \STATE $\mathcal{D}^{(1)} \leftarrow$ Add a random $d^{(1)}$-dimensional solution
    \ELSE
        \STATE $\mathcal{D}^{(t)} \leftarrow \varnothing$
        
        \FORALL{$(\z, y) \in \mathcal{D}^{(t-1)}$}
            \STATE $\z' = (\z^\top, \mathbf{0}_{d^{(t)}-d^{(t-1)}})^\top$
            
            \STATE $\mathcal{D}^{(t)} \leftarrow \mathcal{D}^{(t)} \cup \{ (\z', y) \}$
        \ENDFOR
    \ENDIF

    \ENSURE Dataset $\mathcal{D}^{(t)}$ of current space.
    
	\end{algorithmic}
\end{algorithm}


\section{Pseudo-code of Dataset Initialization Algorithm}
\label{appx_init}

In high-dimensional optimization problems, when the optimization process enters a new subspace, it is necessary to initialize the dataset in the new dimensional space. To fully utilize the data from the existing lower-dimensional subspace, this algorithm generates the new dataset by extending the existing data into the higher-dimensional space. As shown in Algorithm~\ref{alg:init}, in the first iteration ($t=1$), the dataset starts empty and is initialized with a random point in the current dimensional space. For subsequent dimension expansions, the dataset $\mathcal{D}^{(t)}$ for the current dimension is generated by expanding the previous dataset $\mathcal{D}^{(t-1)}$. This expansion is achieved by appending zeros to each solution vector $\z$, thereby extending it into the new subspace while preserving the original evaluation value $y$. This method ensures that valuable data from lower dimensions are retained and utilized in the higher-dimensional optimization process.

\section{Proof of Section 5}\label{appx_proof}

\subsection{Proof of Theorem 5.1}
\begin{proof}
    By Theorem 2 of \citeauthor{gpucb}~[\citeyear{gpucb}], the regret bound is $O^*\left(\sqrt{T(\log T)^{d+1}}\right)$ in the absence of approximation error. To complete the analysis, we need to account for the impact of the approximation error. 
    
    From Lemma 1 of \citeauthor{sre}~[\citeyear{sre}], the approximation error is given by $2\epsilon_{d}$. Combining these results, the total regret bound becomes the sum of the regret without approximation error and the approximation error term. This completes the proof.
\end{proof}

\subsection{Proof of Theorem 5.2}
\begin{proof}
    Let $\sigma^{(t)}(\cdot)$ be the standard deviation function in step $t$, $r^{(t)}$ be the single-step regret without considering the approximation error. Lemma 5.2 of \citeauthor{gpucb}~[\citeyear{gpucb}] reveals that $(r^{(t)})^2=4\beta^{(t)}(\sigma^{(t-1)})^2(\x^{(t-1)})$. Therefore, 
    \begin{align*}
        &\sum_{t=1}^T(r^{(t)})^2\\
        &=\sum_{t=1}^T4\beta^{(t)}(\sigma^{(t-1)})^2\\
        &\overset{(a)}{\leq} \sum_{t=1}^T 4\beta^{(T)}(\sigma^{(t-1)})^2\\
        &=4\beta^{(T)}\sum_{t=1}^T(\sigma^{(t-1)})^2(\x^{(t-1)})\\
        &= 4\beta^{(T)}\sum_{t=1}^T\sigma^2\left(\sigma^{-2}(\sigma^{(t-1)})^{2}(\x^{(t-1)})\right)\\
        &\leq 4\beta^{(T)}\sum_{t=1}^T\sigma^2C_2\log\left(1+\sigma^{-2}(\sigma^{(t-1)})^2(\x^{(t-1)})\right)\\
        &\overset{(b)}{\leq} C_1\beta^{(T)}\sum_{h=1}^H\gamma_{T_h},
    \end{align*}

where (a) is because $\beta^{(t)}$ is nondecreasing, (b) comes from Lemma 5.3 of \citeauthor{gpucb}~[\citeyear{gpucb}].

According to Theorem 5 of \citeauthor{gpucb}~[\citeyear{gpucb}], we have $\gamma_{T_h}=O(\log T_h)^{d_h+1}$. Plugging this into the above inequality, and use Cauchy-Schwarz inequality as Lemma 5.4 in \citeauthor{gpucb}~[\citeyear{gpucb}], we obtain the final result.
\end{proof}


\section{Real-World Datasets}
\label{appx_real}

In this section, we provide a detailed introduction to the real-world datasets used in our experiments.

\textbf{MSLR}~\cite{mslr}: The first real-world dataset is the Microsoft Learning to Rank (MSLR) dataset~\cite{mslr}, specifically the MSLR-WEB-10K version, which includes over 10000 queries, each with 136 features per website page. Each website page has a relevance judgment, ranging from 0 (irrelevant) to 4 (completely relevant). When dealing the dataset, we normalize the dataset and then use neural networks to model the dataset as the objective function. Given the complexity of using the dataset as an oracle, we assess preferences through the neural network's predictions instead. The architecture of this neural network is designed with three hidden layers, each containing 128, 64, and 32 neurons, and utilizes the Sigmoid function as the activation mechanism. 

\textbf{Lasso-Hard}~\cite{lassobench}: The second dataset is Lasso-Hard from LassoBench ~\cite{lassobench}, a 1000-dimensional optimization task designed to identify sparse regression coefficients that minimize Lasso regression loss. 

\textbf{LIMO}~\cite{limo}: The third dataset is LIMO ~\cite{limo}, a framework for molecular generation aimed at optimizing specific properties by operating in a 1024-dimensional latent space learned through a variational autoencoder. Specifically, the LIMO dataset is a drug-like molecule design task, where the objective function is designed to balance the octanol-water partition coefficient, accessibility, and the presence of large rings.

\begin{figure}[t]
    \begin{center}
        \includegraphics[width=1.0\linewidth]{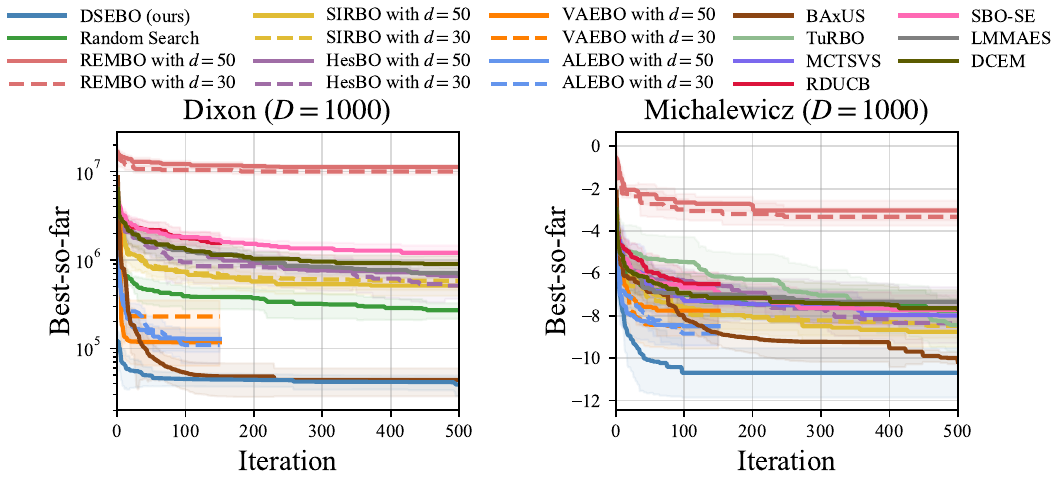}
    \end{center}
    \caption{Results on remaining $1000$-dimensional synthetic functions compared with various high-dimensional optimization algorithms. All algorithms are independently repeated $10$ times.}
    \label{fig:1000}
    \vspace{-10px}
\end{figure}

\begin{figure*}[t]
    \centering
    \includegraphics[width=0.8\linewidth]{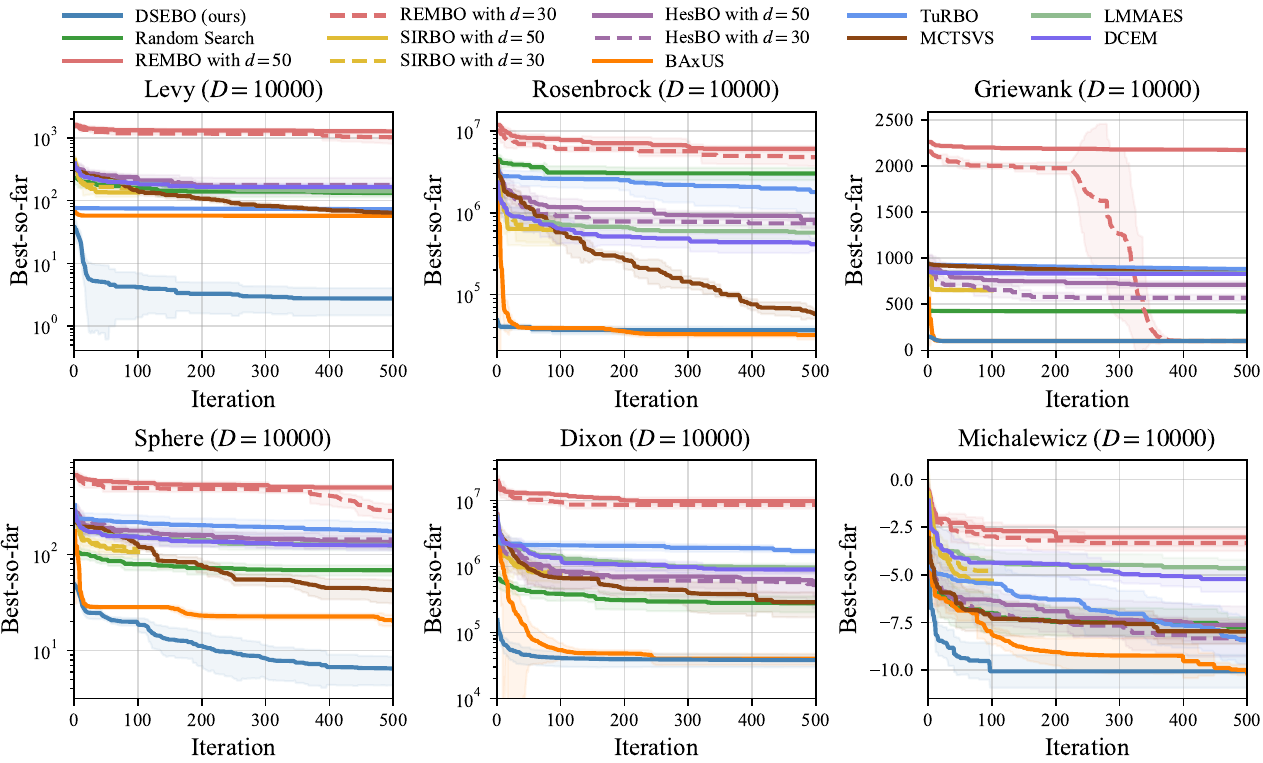}
    \caption{Results on $10000$-dimensional synthetic functions compared with various high-dimensional optimization algorithms. All algorithms are independently repeated $10$ times.}
    \label{fig:10000}
    \vspace{-10px}
\end{figure*}

\section{Baselines Implementation Details}
\label{appx_baseline}

\subsection{High-dimensional Optimization Methods}

\textbf{REMBO}~\cite{rebo}: 
REMBO, the first random embedding approach, repeats using the BoTorch framework in the experiments. We adhere to the same hyper-parameter specifications detailed in ~\cite{rebo}.

\textbf{HesBO}~\cite{hesbo}: 
HesBO takes a novel approach to avoid embedding at the boundary by altering the generation process of the embedding matrix. For our implementation, we have utilized the version made available by the author at the GitHub repository: \url{https://github.com/aminnayebi/HeSBO}.

\textbf{SIRBO}~\cite{sirbo}:
Unlike traditional random embedding techniques, SIRBO computes the embedding matrix using the sliced inverse regression (SIR) method. We use the author’s implementation from \url{https://github.com/cjfcsjt/SILBO/blob/master/SIR\_BO.py}.

\textbf{VAEBO}~\cite{vaebo}: 
VAE-BO employs a variational auto-encoder (VAE) to discern the embedding relationship between high-dimensional spaces and their lower-dimensional counterparts. And we also utilize the code made available by the author: \url{https://github.com/lamda-bbo/MCTS-VS/blob/master/baseline/vae\_bo.py}, 
adjusting the learning rate to $0.001$ and updating the VAE model every $20$ iterations.

\textbf{ALEBO}~\cite{alebo}:
ALEBO refines the acquisition function within constraints to become the state-of-the-art (SOTA) method for random embedding. We use the author’s implementation from \url{https://github.com/facebookresearch/alebo}.

\textbf{BAxUS}~\cite{baxus}:
BAxUS is an embedding-based method designed to optimize high-dimensional black-box functions by using nested random subspaces and a unique dimensionality growth strategy. We use the author's code from \url{https://github.com/LeoIV/BAxUS}.

\textbf{MCTS-VS}~\cite{mcts-vs}:
MCTS-VS employs Monte Carlo tree search to iteratively select and optimize a subset of variables within a low-dimensional subspace. The implementation from \url{https://github.com/lamda-bbo/MCTS-VS} is used.

\textbf{TuRBO}~\cite{turbo}:
TuRBO is an efficient method for handling high-dimensional optimization by dividing the space into smaller regions for local optimization. We use the code from \url{https://github.com/uber-research/TuRBO}.

\textbf{SAASBO}~\cite{saasbo}:
SAASBO focuses on optimizing only a few important dimensions, thus reducing computational complexity in high-dimensional spaces. We implement the method using the author's code from \url{https://github.com/martinjankowiak/saasbo}.

\textbf{DuMBO}~\cite{dumbo}: 
DuMBO employs decentralized message-passing and a refined acquisition function to relax additive structure constraints in high-dimensional Bayesian Optimization, achieving asymptotic optimality on functions with complex decompositions. The code for DuMBO can be found at \url{https://github.com/abardou/dumbo}.

\textbf{RDUCB}~\cite{rducb}: 
RDUCB uses random tree decompositions to construct additive GP models with cycle-free pairwise dimensional interactions, effectively addressing the challenge of being misled by local data in high-dimensional optimization. The code for RDUCB is available at \url{https://github.com/huawei-noah/HEBO}.

\textbf{SBO-SE}~\cite{sbose}: 
SBO-SE employs a robust strategy to initialize the length-scale of the GP kernel, avoiding the vanishing gradient problem during Gaussian process training in high-dimensional Bayesian optimization. The code we use is implemented based on the BoTorch library.

\textbf{LMMAES}~\cite{lmmaes}: 
LMMAES reduces the time and space complexity of traditional matrix adaptation evolution strategy by approximating the covariance structure using a small set of evolution paths. The code for LMMAES is available as part of the pypop library at \url{https://github.com/Evolutionary-Intelligence/pypop}.

\textbf{DCEM}~\cite{dcem}: 
DCEM is a differentiable variant of the cross-entropy method that enables gradient-based end-to-end learning by employing a smooth top-k operation, addressing the challenges of backpropagating through non-differentiable or discrete optimization steps. The implementation of DCEM can be found in the pypop library at \url{https://github.com/Evolutionary-Intelligence/pypop}.

\begin{figure*}[!t]
  \centering
  \includegraphics[width=0.8\linewidth]{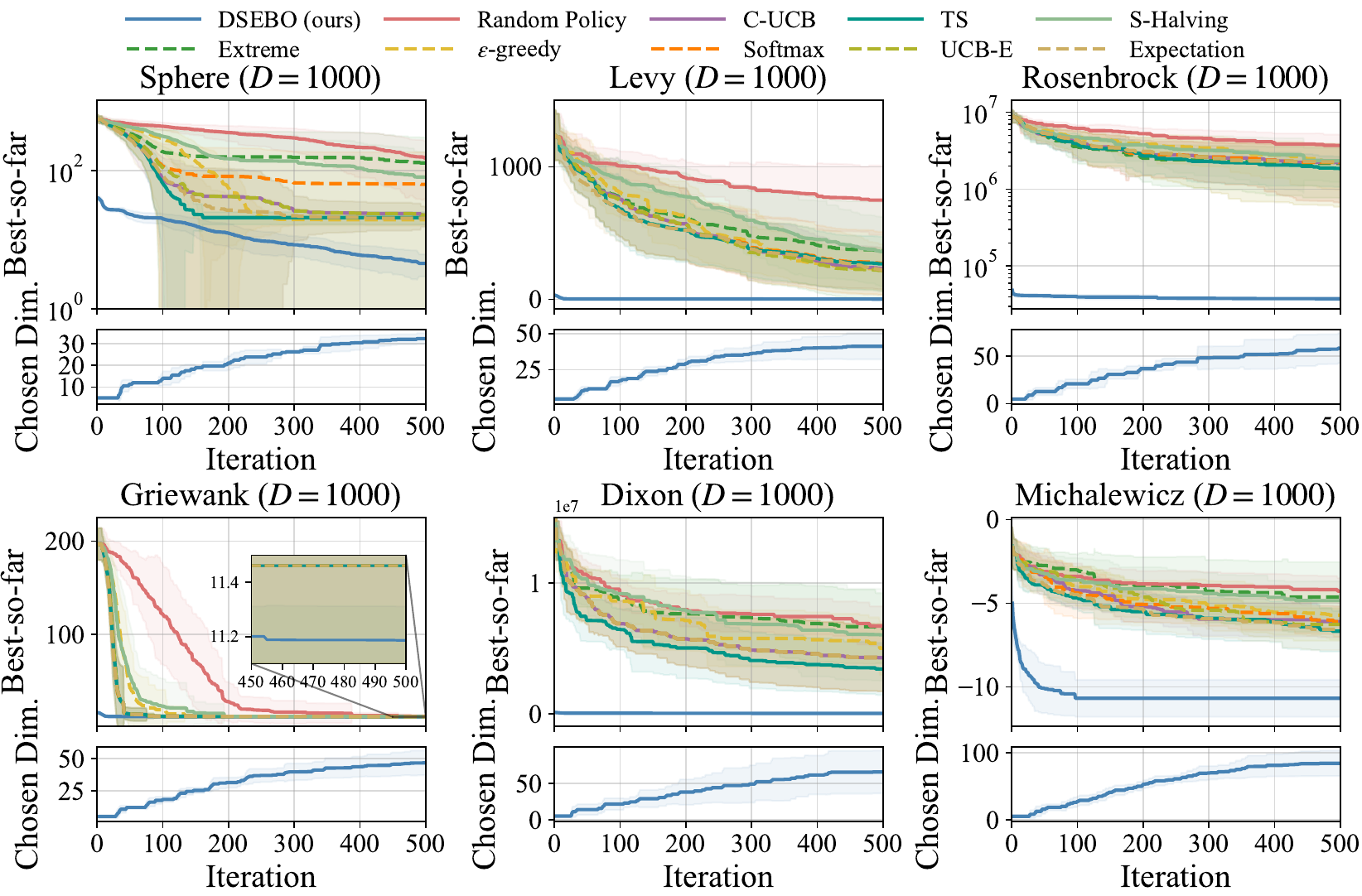}
  \vspace{-10pt}
  \caption{Results on synthetic functions compared with different MAB strategies, and the subspace dimensions selected by DSEBO throughout the optimization process. All algorithms are independently repeated 10 times.}
  \label{fig:armASLG}
\end{figure*}

\begin{figure*}[!t]
    \centering
    \includegraphics[width=0.8\linewidth]{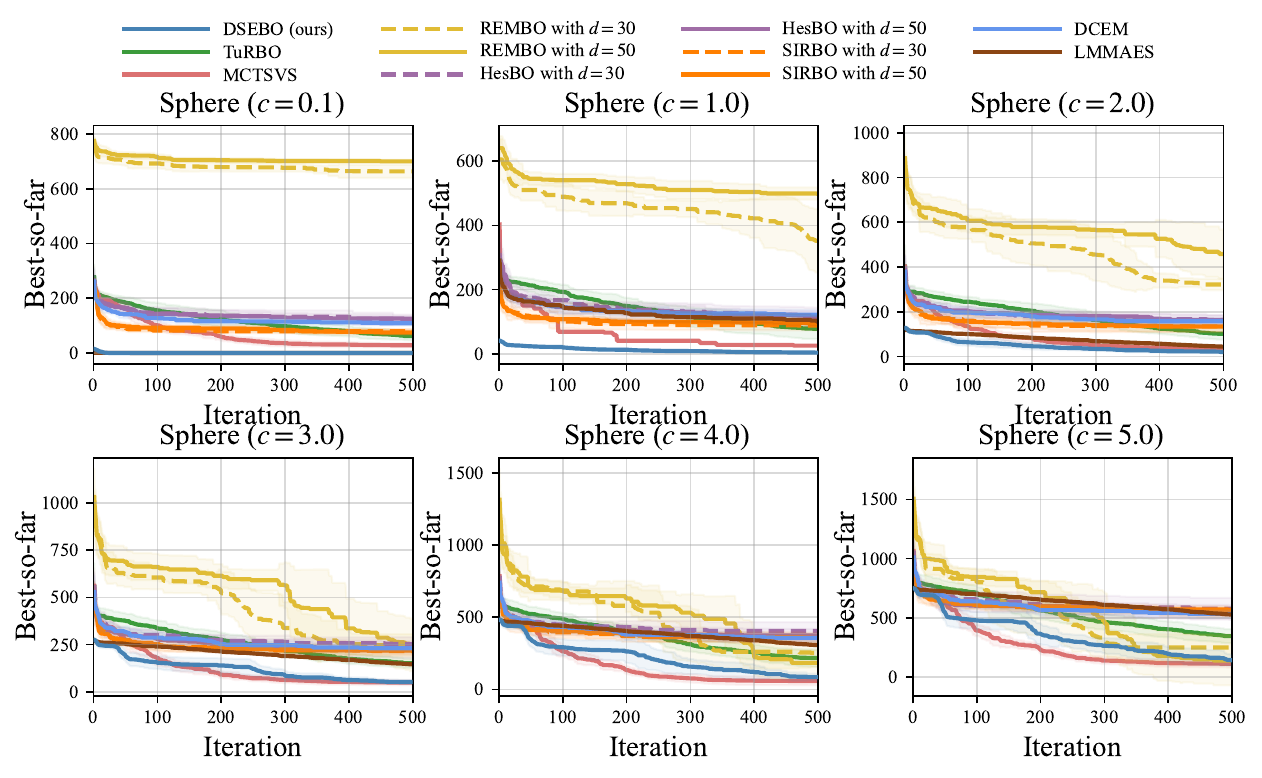}
    \vspace{-10px}
    \caption{Results on the $1000$-dimensional Sphere function with an effective dimension of $30$ for varying $c$, compared with various algorithms. All algorithms are independently repeated $10$ times.}
    \label{fig:Sphere-c}
    \vspace{-10px}
\end{figure*}

\begin{figure}[!htbp]
    \centering
    \includegraphics[width=0.8\linewidth]{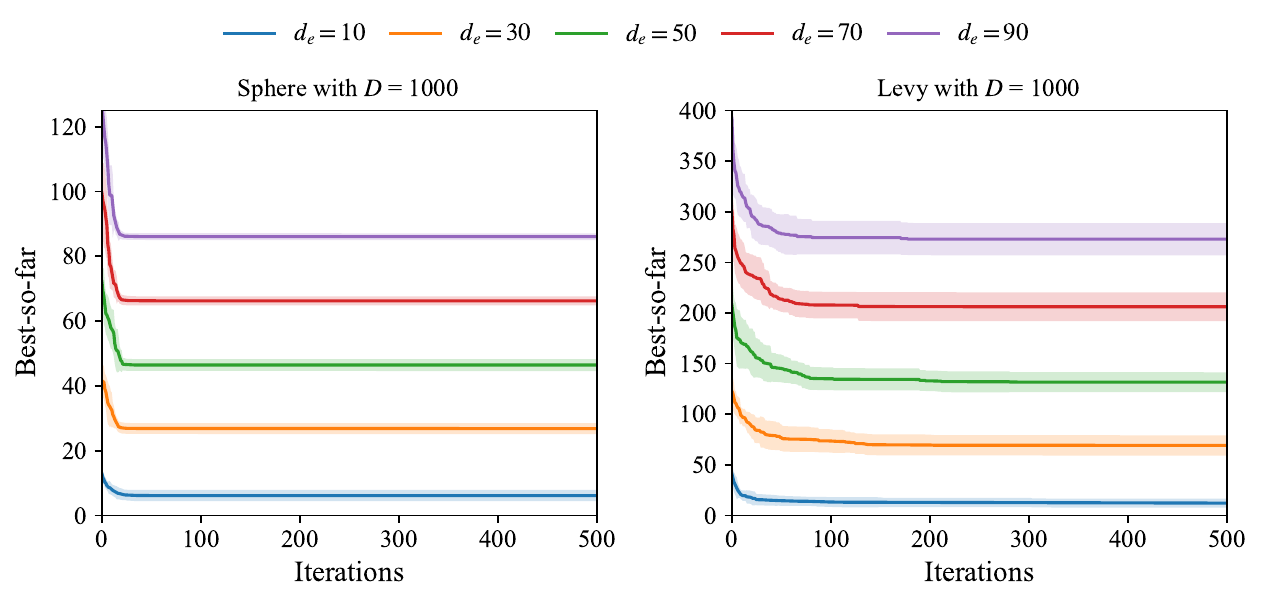}
    \vspace{-10px}
    \caption{Results on the $1000$-dimensional synthetic function with varying effective dimension $de$. All algorithms are independently repeated $10$ times.}
    \label{fig:de}
    \vspace{-10px}
\end{figure}

\subsection{Multi-armed Bandit Strategy}

\textbf{$\epsilon$-greedy}~\cite{egreedy}: This method strikes a simple balance between exploration and exploitation by selecting the best known action most of the time while occasionally choosing randomly with a small probability $\epsilon$. This ensures that the algorithm does not rely solely on the existing knowledge and periodically explores other options, potentially discovering more optimal strategies. In our experiments, we set $\epsilon=0.5$. 

\textbf{C-UCB}~\cite{cucb}: Classic Upper Confidence Bound (C-UCB) is the foundational algorithm in the UCB family, which selects actions based on a trade-off between their past rewards and a confidence interval, ensuring a balance between exploiting known rewards and exploring less certain options. In our experiments, the $\kappa$ is set to $\sqrt{2\log(T)/n_i}$ as default, where $T$ is the total number of iterations and $n_i$ is the number of $i$-th subspace optimized time.

\textbf{UCB-E}~\cite{ucbe}: The UCB-E algorithm enhances the traditional Upper Confidence Bound approach by introducing a controllable exploration factor, \(c\). This factor allows for fine-tuning the exploration level, independent of each arm’s estimated uncertainty, making it especially useful in scenarios where the standard uncertainty model might not adequately represent the exploratory needs. In our experiments, we set $c=0.5$. 

\textbf{TS}~\cite{ts}: Thompson Sampling (TS) is a probabilistic approach that selects arms based on samples drawn from their estimated reward distributions, effectively balancing exploration and exploitation by adapting to the uncertainty in the estimate of each arm's reward. 

\textbf{Softmax}~\cite{softmax}: In the Softmax method, the selection of arms is controlled by a probability distribution weighted by the estimated value of each arm, which is adjusted according to the temperature parameter \(\tau\). Lower \(\tau\) favors the best arms, while higher \(\tau\) increases exploration of unknown arms. In our experiments, we set $\tau=1.0$.

\textbf{S-Halving}~\cite{shalv}: Successive Halving is a method of saving budgets by first allocating equal resources to all candidates and then phasing out underperforming candidates, focusing resources on the most promising bandits as the process iterates.

\textbf{Extreme}: The extreme strategy is designed to achieve exploitation by aggressively focusing on the arms that appear to be optimal, often used in environments where the cost of exploration is high or when quick decisions are critical. 

\textbf{Expectation}: The expectation strategy selects arms based on the expected value of the reward, typically favoring arms with higher average rewards. 

\textbf{Random}: The random strategy selects actions randomly, ensuring equal exploration of all available options. It is useful for baseline comparisons or when no prior data exists. Although simple, it can occasionally reveal overlooked possibilities in complex scenarios.

\begin{figure}[!htbp]
  \centering
  \includegraphics[width=0.8\linewidth]{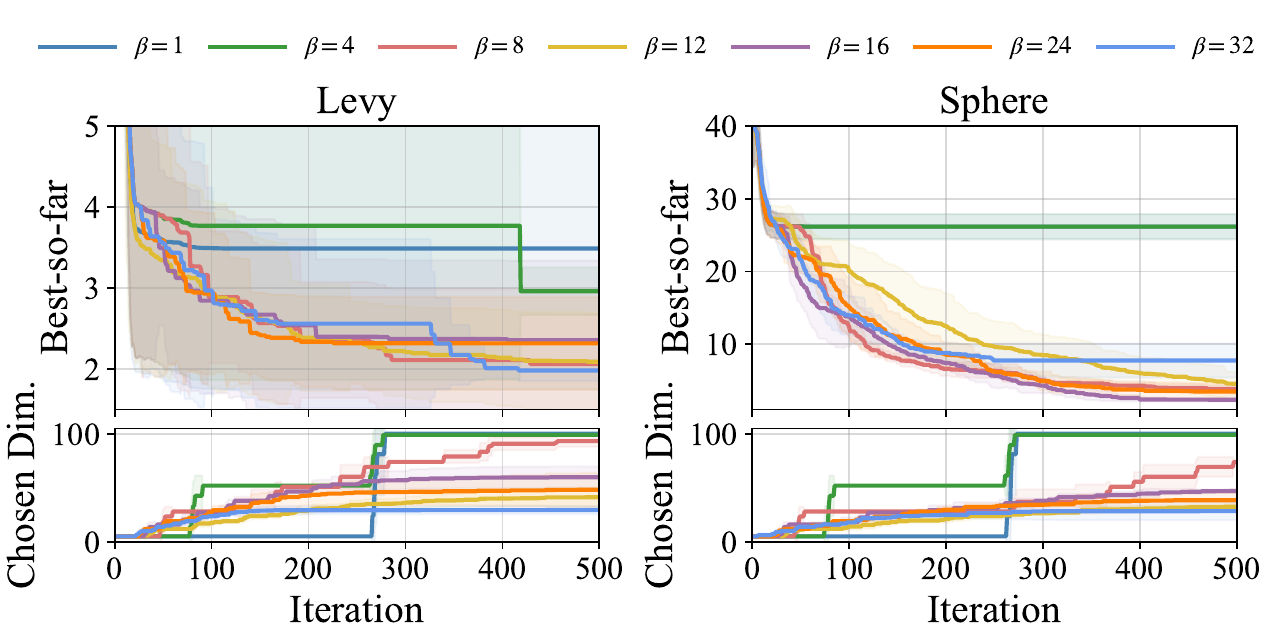}
  \vspace{-10pt}
  \caption{Results of hyper-parameter $\beta$ experiments in the synthetic functions with $D=1000$, $d_e = 30$. All experiments are repeated 10 times.}
  \label{fig:beta}
  \vspace{-10pt}
\end{figure}

\section{Detailed Results}
\label{appx_result}

The remaining experimental results of the synthetic functions with $D=1000$ mentioned in the Section~6 are shown in Figure~\ref{fig:1000}.
The experimental results of the synthetic functions with $D=10000$ are shown in Figure~\ref{fig:10000}. When dealing with high-dimensional tasks with $D=10000$, since SIRBO cannot complete 500 iterations within 8 hours, only the results of the first 100 iterations are plotted. Experimental results show that on 6 synthetic functions, DSEBO shows the fastest convergence speed and can find solutions with high performance. 

Additionally, the comparison results of synthetic functions between DSEBO and a series of MAB strategies are presented in Figure~\ref{fig:armASLG}. As evidenced by the figures, DSEBO verifies significant advantages over other arm selection strategies by dynamically expanding subspace dimensions, efficiently sharing data across dimensions, and achieving faster convergence and superior performance in high-dimensional optimization tasks.

Table~\ref{tab:1}, Table~\ref{tab:2},  Table~\ref{tab:3}, Table~\ref{tab:4} and Table~\ref{tab:5} record the final mean convergence value of various algorithms under each experimental environment, the optimal solution that can be found, and the mean operation time. 
The results show that DSEBO can provide a better dynamic dimension expanding strategy and show good optimization performance in all synthetic functions and real-world tasks, which reflects in stable convergence performance and the ability to explore excellent solutions.

To further evaluate the capability of DSEBO, we also conduct experiments by adjusting the shifting scalar $c$ and the effective dimension $d_e$ of the synthetic functions, as shown in Figure~\ref{fig:Sphere-c} and Figure~\ref{fig:de}, which verifies that DSEBO consistently maintains its superior performance across different settings.

\section{Hyper-parameter Analysis}
\label{appx_hyper}

We conduct hyper-parameter analysis on $\beta$ and the upper boundary of search range $d_h$ on two synthetic functions to verify the robustness of DSEBO under different hyper-parameter configurations, as shown in Figure~\ref{fig:beta} and Figure~\ref{fig:dh}.
%
%

The experiments of hyper-parameter $\beta$ show that when $\beta$ is too large (e.g., $\beta=24, 32$), the dimension updates more frequently, but the change in dimensionality is very small. Conversely, when $\beta$ is too small (e.g., $\beta=1, 4$), the change in dimensionality becomes less intuitive, significantly affecting the convergence of the solution. Nevertheless, DSEBO consistently finds good solutions across different $\beta$ configurations, except for the extreme hyper-parameter values (e.g., $\beta=1, 4$).
From the experiments of the upper boundary of search range $d_h$, it can be observed that setting $d_h = 100$ yields satisfactory optimization performance. Moreover, compared to $d_h = 80$ and $d_h = 120$, $d_h = 100$ results in a more reasonable subspace dimension selection that does not simply opt for higher dimensions indiscriminately.

\begin{figure}[!t]
  \centering
  \includegraphics[width=0.8\linewidth]{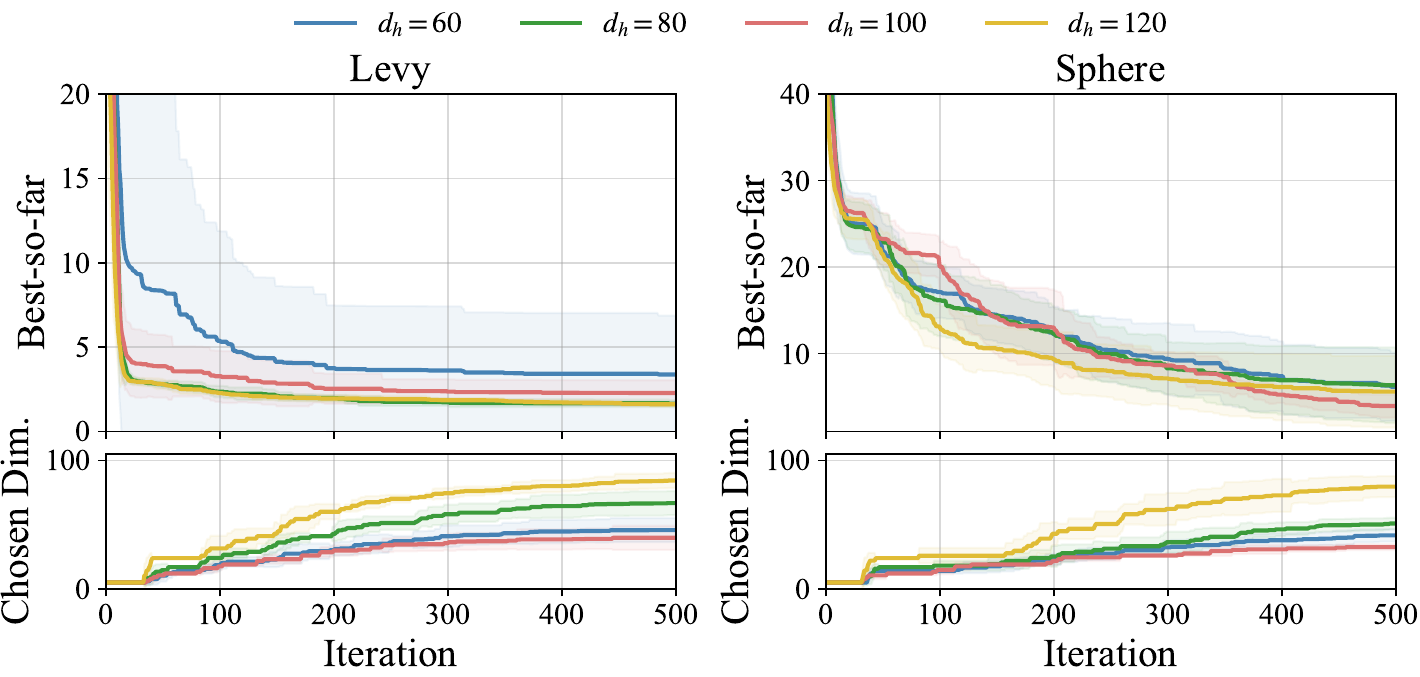}
  \vspace{-10pt}
  \caption{Results of hyper-parameter $d_h$ experiments in the synthetic functions with $D=1000$, $d_e = 30$. All experiments are repeated 10 times.}
  \label{fig:dh}
  \vspace{-10pt}
\end{figure}

The hyper-parameter analysis verifies that the chosen hyper-parameter values are reasonable, and shows the robustness of the DSEBO across different settings.

\begin{table*}[htbp]
  \centering
  \caption{Detailed results of high-dimensional optimization algorithms on synthetic functions with $D=1000$. For methods based on subspace embedding techniques, we respectively evaluate two hyperparameter settings with $d=30$ and $d=50$.}
  \resizebox{0.78\linewidth}{!}{
    \begin{tabular}{c|c|cccccc}
    \toprule
    \multicolumn{1}{c|}{\multirow{2}[4]{*}{Objective Function}} & \multirow{2}[4]{*}{Method} & \multicolumn{2}{c}{Convergence Value} & \multicolumn{2}{c}{Best Solution} & \multicolumn{1}{c}{\multirow{2}[4]{*}{Time (s)}} \\
\cmidrule(lr){3-4} \cmidrule(lr){5-6}          &       & d=30  & d=50  & d=30  & d=50  &  \\
    \midrule
    \multicolumn{1}{c|}{\multirow{10}[4]{*}{Rosenbrock (D=1000)}} & SIRBO & 413740.1875$\pm$120969.2969 & 459225.1875$\pm$96250.1875 & 259375.1562 & 348556.6562 & 54 \\
          & HesBO & 633849.8750$\pm$200530.8281 & 819959.5000$\pm$137244.8906 & 341874.2500 & 681234.4375 & 125 \\
          & REMBO & 6223221.1331$\pm$641472.7863 & 6183677.2826$\pm$809638.4103 & 5236903.2094 & 5851605.9492 & 766 \\
          & ALEBO & 121149.1016$\pm$46185.1758 & 109807.8828$\pm$15043.3242 & 81243.7891 & 92437.4062 & 27665 \\
          & VAEBO & 81353.2500$\pm$13184.3848 & 89912.2031$\pm$11958.9248 & 63902.0126 & 79479.9531 & 21107 \\
          & BAxUS & \multicolumn{2}{c}{\textbf{33332.2138$\pm$2957.4265}} & \multicolumn{2}{c}{\textbf{23831.3776}} & 1241 \\
          \cmidrule(lr){2-7}  
          & MCTSVS & \multicolumn{2}{c}{129055.41$\pm$59201.297} & \multicolumn{2}{c}{70404.7031} & 57 \\
          & TuRBO & \multicolumn{2}{c}{79969.6875$\pm$14060.7158} & \multicolumn{2}{c}{56217.6797} & 156 \\
          & RDUCB & \multicolumn{2}{c}{915660.5000$\pm$222155.5938} & \multicolumn{2}{c}{ 556649.5465  } &  16653 \\
          & SBO-SE & \multicolumn{2}{c}{643878.8125$\pm$201391.4688} & \multicolumn{2}{c}{  254839.0312 } & 1334 \\
          & DCEM & \multicolumn{2}{c}{527429.5391$\pm$83910.8251} & \multicolumn{2}{c}{390377.3541} & 7\\
          & LMMAES & \multicolumn{2}{c}{447685.8176$\pm$110445.6733} & \multicolumn{2}{c}{252948.5421} & 14\\
          & Random Search & \multicolumn{2}{c}{2509598.2157$\pm$520699.5662} & \multicolumn{2}{c}{1677970.2069} & 4 \\
\cmidrule(lr){2-7}          & DSEBO (ours) & \multicolumn{2}{c}{37699.6758$\pm$780.7019} & \multicolumn{2}{c}{37010.5820} & 1046 \\
\midrule
    \multicolumn{1}{c|}{\multirow{10}[4]{*}{Sphere (D=1000)}} & SIRBO & 87.9686$\pm$15.1527 & 95.3077$\pm$7.7289 & 49.4295 & 85.0751 & 90 \\
          & HesBO & 120.2445$\pm$13.1439 & 114.5023$\pm$28.0399 & 99.4486 & 70.6100 & 305 \\
          & REMBO & 350.8991$\pm$98.4352 & 498.2905$\pm$18.5448 & 192.5025 & 477.4872 & 453 \\
          & ALEBO & 31.3825$\pm$10.4225 & 37.4376$\pm$2.5337 & 19.4079 & 35.9000 & 26648 \\
          & VAEBO & 37.4775$\pm$6.6590 & 37.6364$\pm$5.9088 & 25.1240 & 31.1741 & 15163 \\
          & BAxUS & \multicolumn{2}{c}{17.8187$\pm$2.7321} & \multicolumn{2}{c}{14.4317} & 785 \\
          \cmidrule(lr){2-7}  
          & MCTSVS & \multicolumn{2}{c}{36.5522$\pm$4.2780} & \multicolumn{2}{c}{29.4209} & 51 \\
          & TuRBO & \multicolumn{2}{c}{29.7788$\pm$11.4849} & \multicolumn{2}{c}{14.512} & 137 \\
          & RDUCB & \multicolumn{2}{c}{174.8453$\pm$12.5177} & \multicolumn{2}{c}{ 151.0879  } &  13759 \\
          & SBO-SE & \multicolumn{2}{c}{152.2971$\pm$9.8699} & \multicolumn{2}{c}{  135.7395 } & 506 \\
          & DCEM & \multicolumn{2}{c}{122.1613$\pm$15.5641} & \multicolumn{2}{c}{100.8576} & 11\\
          & LMMAES & \multicolumn{2}{c}{99.1157$\pm$15.992} & \multicolumn{2}{c}{76.3501} & 12\\
          & Random Search & \multicolumn{2}{c}{46.3139$\pm$37.8249} & \multicolumn{2}{c}{40.1777} & 5 \\
\cmidrule(lr){2-7}          & DSEBO (ours) & \multicolumn{2}{c}{\textbf{3.9387$\pm$1.3304}} & \multicolumn{2}{c}{\textbf{2.4966}} & 339 \\
    \midrule
    \multicolumn{1}{c|}{\multirow{10}[4]{*}{Levy (D=1000)}} & SIRBO & 90.0148$\pm$16.1821 & 97.3569$\pm$11.5434 & 63.7397 & 77.0712 & 95 \\
          & HesBO & 149.0245$\pm$36.4953 & 146.9163$\pm$18.3832 & 117.1409 & 108.5397 & 295 \\
          & REMBO & 1046.6322$\pm$67.1422 & 1206.4437$\pm$76.6944 & 927.1597 & 1087.9023 & 756 \\
          & ALEBO & 35.7776$\pm$1.6821 & 35.0879$\pm$9.3454 & 34.5007 & 24.4679 & 25213 \\
          & VAEBO & 18.6543$\pm$6.8300 & 19.4487$\pm$8.2072 & 11.1254 & 8.6804 & 15323 \\
          & BAxUS & \multicolumn{2}{c}{6.7886$\pm$0.0155} & \multicolumn{2}{c}{6.7753} & 630 \\
          \cmidrule(lr){2-7}  
          & MCTSVS & \multicolumn{2}{c}{32.6922$\pm$8.1272} & \multicolumn{2}{c}{18.0643} & 118 \\
          & TuRBO & \multicolumn{2}{c}{6.4294$\pm$0.1092} & \multicolumn{2}{c}{6.2491} & 185 \\
          & RDUCB & \multicolumn{2}{c}{198.5230$\pm$27.5693} & \multicolumn{2}{c}{ 146.3279  } &  13496 \\
          & SBO-SE & \multicolumn{2}{c}{184.6281$\pm$29.2152} & \multicolumn{2}{c}{  110.4856 } & 753 \\
          & DCEM & \multicolumn{2}{c}{134.6127$\pm$14.4924} & \multicolumn{2}{c}{110.9705} & 9\\
          & LMMAES & \multicolumn{2}{c}{98.4986$\pm$9.0345} & \multicolumn{2}{c}{81.0548} & 11\\
          & Random Search & \multicolumn{2}{c}{113.8001$\pm$18.2524} & \multicolumn{2}{c}{88.0861} & 5 \\
\cmidrule(lr){2-7}          & DSEBO (ours) & \multicolumn{2}{c}{\textbf{2.2816$\pm$0.7567}} & \multicolumn{2}{c}{\textbf{1.5805}} & 268 \\
    \midrule
    \multicolumn{1}{c|}{\multirow{10}[4]{*}{Griewank (D=1000)}} & SIRBO & 61.7318$\pm$0.9419 & 62.0860$\pm$0.9048 & 60.1888 & 60.1376 & 92 \\
          & HesBO & 64.8294$\pm$8.3552 & 73.6908$\pm$5.3223 & 44.9517 & 63.0216 & 315 \\
          & REMBO & 11.4265$\pm$0.8796 & 220.8467$\pm$2.4297 & 11.0122 & 215.4218 & 639 \\
          & ALEBO & 18.1015$\pm$1.3217 & 18.4540$\pm$1.2353 & 16.8223 & 17.3414 & 23792 \\
          & VAEBO & 14.6719$\pm$0.4778 & 13.9477$\pm$0.6756 & 14.1418 & 12.7958 & 14685 \\
          & BAxUS & \multicolumn{2}{c}{11.2585$\pm$0.0474} & \multicolumn{2}{c}{11.1369} & 1360 \\
          \cmidrule(lr){2-7}  
          & MCTSVS & \multicolumn{2}{c}{62.4722$\pm$1.7076} & \multicolumn{2}{c}{60.0604} & 64 \\
          & TuRBO & \multicolumn{2}{c}{63.1086$\pm$1.5921} & \multicolumn{2}{c}{61.1658} & 259 \\
          & RDUCB & \multicolumn{2}{c}{164.7312$\pm$56.6559} & \multicolumn{2}{c}{ 74.8729  } &  15583 \\
          & SBO-SE & \multicolumn{2}{c}{88.7530$\pm$1.0904} & \multicolumn{2}{c}{  86.7066 } & 843 \\
          & DCEM & \multicolumn{2}{c}{80.5171$\pm$1.4105} & \multicolumn{2}{c}{77.9347} & 10\\
          & LMMAES & \multicolumn{2}{c}{83.7417$\pm$2.1408} & \multicolumn{2}{c}{79.8172} & 12\\
          & Random Search & \multicolumn{2}{c}{41.1532$\pm$0.4551} & \multicolumn{2}{c}{40.3172} & 4 \\
\cmidrule(lr){2-7}          & DSEBO (ours) & \multicolumn{2}{c}{\textbf{11.2488$\pm$0.1092}} & \multicolumn{2}{c}{\textbf{10.9793}} & 276 \\
    \midrule
    \multicolumn{1}{c|}{\multirow{10}[4]{*}{Dixon (D=1000)}} 
          & SIRBO & 582309.6250$\pm$127018.3516 & 512858.5000$\pm$120621.3672 & 399245.0938  & 282305.4062  & 116  \\
          & HesBO & 512598.8125$\pm$150171.8906 & 662775.3750$\pm$159014.9062 &  314195.2812 &  361927.0312 & 148  \\
          & REMBO & 10028901.2305$\pm$751474.3750 & 11296827.1480$\pm$432756.4375 & 8892939.2391  & 10683308.8750 & 531  \\
          & ALEBO & 108987.9219$\pm$41572.4883 & 127629.6016$\pm$37048.0312 & 47649.1914  & 94961.3281  &  23283 \\
          & VAEBO & 229556.6719$\pm$119026.5547 & 117513.2344$\pm$54179.7344 & 111616.2344  & 68923.3125  & 11638   \\
          & BAxUS & \multicolumn{2}{c}{44058.0664$\pm$15399.4717} & \multicolumn{2}{c}{ 26695.2305} &  953 \\
          \cmidrule(lr){2-7}  
          & MCTSVS & \multicolumn{2}{c}{126604.6875$\pm$52375.7891} & \multicolumn{2}{c}{56397.1484 } & 58  \\
          & TuRBO & \multicolumn{2}{c}{738974.3125$\pm$436899.8750} & \multicolumn{2}{c}{111616.2344 } & 218 \\
          & RDUCB & \multicolumn{2}{c}{1566930.6250$\pm$470125.1562} & \multicolumn{2}{c}{825860.3044 } &  16561 \\
          & SBO-SE & \multicolumn{2}{c}{1208742.0000$\pm$248711.9062} & \multicolumn{2}{c}{864583.7500 } & 657 \\
          & DCEM & \multicolumn{2}{c}{900571.3348$\pm$214454.3493} & \multicolumn{2}{c}{495204.757} & 11\\
          & LMMAES & \multicolumn{2}{c}{682351.8439$\pm$98667.3857} & \multicolumn{2}{c}{515423.2172} & 13\\
          & Random Search & \multicolumn{2}{c}{270112.6082$\pm$53654.5844} & \multicolumn{2}{c}{184623.3812 } & 11  \\
\cmidrule(lr){2-7}          & DSEBO (ours) & \multicolumn{2}{c}{\textbf{39076.9609$\pm$10246.4609}} & \multicolumn{2}{c}{ \textbf{25189.1094} } &  597 \\
\midrule
    \multicolumn{1}{c|}{\multirow{10}[4]{*}{Michalewicz (D=1000)}} 
          & SIRBO & -8.4669$\pm$0.5782 & -8.7477$\pm$0.7193 &  -9.3439  & -9.8313   &  144  \\
          & HesBO & -8.3571$\pm$0.9363 & -7.6276$\pm$0.9776 &  -9.9054 &  -9.1226 &  136  \\
          & REMBO & -3.3269$\pm$0.4194 & -3.0310$\pm$0.4671 & -3.9171  & -3.7214  &  784  \\
          & ALEBO & -8.8393$\pm$0.5495 & -8.4900$\pm$0.9351 &  -9.7144  &  -10.0362  &  26832  \\
          & VAEBO & -8.4725$\pm$1.0752 & -7.7547$\pm$1.2024 & -10.5696   &  -10.0460  & 11251   \\
          & BAxUS & \multicolumn{2}{c}{-10.1528$\pm$0.3488} & \multicolumn{2}{c}{ -10.4914 } & 1072 \\
          \cmidrule(lr){2-7}  
          & MCTSVS & \multicolumn{2}{c}{-7.9805$\pm$0.6546} & \multicolumn{2}{c}{-8.9399  } & 34 \\
          & TuRBO & \multicolumn{2}{c}{-8.4346$\pm$1.2460} & \multicolumn{2}{c}{-10.4085 } & 246 \\
          & RDUCB & \multicolumn{2}{c}{-6.4992$\pm$0.5495} & \multicolumn{2}{c}{ -7.8331 } &  14549 \\
          & SBO-SE & \multicolumn{2}{c}{-7.7120$\pm$0.6833} & \multicolumn{2}{c}{ -8.6905 } & 542 \\
          & DCEM & \multicolumn{2}{c}{-7.6842$\pm$0.8374} & \multicolumn{2}{c}{-9.2082} & 8\\
          & LMMAES & \multicolumn{2}{c}{-7.3345$\pm$0.4382} & \multicolumn{2}{c}{-7.9377} & 14\\
          & Random Search & \multicolumn{2}{c}{-7.7538$\pm$0.5373} & \multicolumn{2}{c}{-9.0669  } &  16  \\
\cmidrule(lr){2-7}          & DSEBO (ours) & \multicolumn{2}{c}{\textbf{-10.6887$\pm$1.1657}} & \multicolumn{2}{c}{\textbf{ -12.9019  }} &  465  \\
    \bottomrule
    \end{tabular}%
  }
  \label{tab:1}%
\end{table*}%

\begin{table*}[!htbp]
  \centering
  \caption{Detailed results of high-dimensional optimization algorithms on real-world tasks. For methods based on subspace embedding techniques, we respectively evaluate two hyperparameter settings with $d=50$ and $d=80$.}
   \resizebox{0.95\textwidth}{!}{
    \begin{tabular}{c|c|cccccc}
    \toprule
    \multicolumn{1}{c|}{\multirow{2}[4]{*}{Objective Function}} & \multirow{2}[4]{*}{Method} & \multicolumn{2}{c}{Convergence Value} & \multicolumn{2}{c}{Best Solution} & \multicolumn{1}{c}{\multirow{2}[4]{*}{Time (s)}} \\
    \cmidrule(lr){3-4} \cmidrule(lr){5-6}
          &       & d=50  & d=80  & d=50  & d=80  &  \\
    \midrule
    \multicolumn{1}{c|}{\multirow{11}[4]{*}{MSLR}} 
          & SIRBO & -8.6166$\pm$0.0063 & -8.5711$\pm$0.0094 & -8.6594 & -8.6050 & 71 \\
          & HesBO & -8.7637$\pm$0.0161 & -8.6621$\pm$0.0272 & -8.8077 & -8.8044 & 219 \\
          & REMBO & -8.0890$\pm$0.0195 & -8.8185$\pm$0.0145 & -8.8886 & -8.8850 & 252 \\
          & ALEBO & -8.8336$\pm$0.0112 & -8.8620$\pm$0.0197 & -8.8814 & -8.8962 & 24016 \\
          & VAEBO & -7.7777$\pm$0.1623 & -7.7285$\pm$0.1480 & -8.5001 & -8.2889 & 9766 \\
          & BAxUS & \multicolumn{2}{c}{-8.8998$\pm$0.0964} & \multicolumn{2}{c}{-9.0185} & 554 \\
          \cmidrule(lr){2-7}  
          & MCTSVS & \multicolumn{2}{c}{-8.8755$\pm$0.1140} & \multicolumn{2}{c}{-9.0446} & 49 \\
          & TuRBO & \multicolumn{2}{c}{-8.9199$\pm$0.1023} & \multicolumn{2}{c}{-9.0642} & 212 \\
          & SAASBO & \multicolumn{2}{c}{-8.8604$\pm$0.0472} & \multicolumn{2}{c}{-8.9474} & 14326 \\
          & RDUCB & \multicolumn{2}{c}{-8.8035$\pm$0.0944} & \multicolumn{2}{c}{-8.9027} &  3776 \\
          & DuMBO & \multicolumn{2}{c}{-8.7808$\pm$0.1257} & \multicolumn{2}{c}{-8.9408} & 6873\\
          & SBO-SE & \multicolumn{2}{c}{-8.7492$\pm$0.1289} & \multicolumn{2}{c}{ -8.9026} & 307  \\
          & DCEM & \multicolumn{2}{c}{-8.7281$\pm$0.0806} & \multicolumn{2}{c}{-8.8316} & 5\\
          & LMMAES & \multicolumn{2}{c}{-8.6829$\pm$0.1933} & \multicolumn{2}{c}{\textbf{-9.2051}} & 8\\
          & Random Search & \multicolumn{2}{c}{-4.2603$\pm$1.4577} & \multicolumn{2}{c}{-7.5904} & 2 \\
          \cmidrule(lr){2-7}          
          & DSEBO (ours) & \multicolumn{2}{c}{\textbf{-8.9396$\pm$0.0914}} & \multicolumn{2}{c}{{-9.1933}} & 245 \\
    \midrule
    \multicolumn{1}{c|}{\multirow{11}[4]{*}{Lasso-Hard}} 
          & SIRBO & 39.8469$\pm$2.7875 & 40.0981$\pm$1.9785 & 37.9968  & 37.7477 &  145 \\
          & HesBO & 41.2607$\pm$11.5219 & 47.6895$\pm$8.0687 & 24.9073  &  37.6182 &  317 \\
          & REMBO & 31.1405$\pm$21.1006 & 41.2012$\pm$7.1255 & 6.7579  & 34.3955  & 740 \\
          & ALEBO & 11.2730$\pm$5.7479 & 17.7599$\pm$3.1188 & 8.0080  & 12.9113  &  21846 \\
          & VAEBO & 23.4285$\pm$9.4164 & 14.2004$\pm$3.6496 &  10.9416 & 9.8085  & 13657  \\
          & BAxUS & \multicolumn{2}{c}{6.1045$\pm$0.5563} & \multicolumn{2}{c}{5.1150 } & 642  \\
          \cmidrule(lr){2-7}  
          & MCTSVS & \multicolumn{2}{c}{9.4854$\pm$2.7239} & \multicolumn{2}{c}{7.6786 } &  133 \\
          & TuRBO & \multicolumn{2}{c}{11.5030$\pm$7.3585} & \multicolumn{2}{c}{4.2641} &  276 \\
          & RDUCB & \multicolumn{2}{c}{11.6843$\pm$6.2225} & \multicolumn{2}{c}{4.4439} &  14924 \\
          & DuMBO & \multicolumn{2}{c}{20.0604$\pm$4.6829} & \multicolumn{2}{c}{12.6193} & 14700\\
          & SBO-SE & \multicolumn{2}{c}{49.3089$\pm$4.2530} & \multicolumn{2}{c}{42.7787 } &  584  \\
          & DCEM & \multicolumn{2}{c}{43.6111$\pm$6.5621} & \multicolumn{2}{c}{32.1892} & 96\\
          & LMMAES & \multicolumn{2}{c}{5.5371$\pm$3.2662} & \multicolumn{2}{c}{\textbf{1.8479}} & 112\\
          & Random Search & \multicolumn{2}{c}{45.2683$\pm$1.5762} & \multicolumn{2}{c}{ 42.5421} &  29 \\
          \cmidrule(lr){2-7}         
          & DSEBO (ours) & \multicolumn{2}{c}{\textbf{3.8613$\pm$0.4896}} & \multicolumn{2}{c}{{ 3.4248}} &  483 \\
        \midrule
    \multicolumn{1}{c|}{\multirow{11}[4]{*}{LIMO}} 
          & SIRBO & -5.2516$\pm$0.8961 & -5.1352$\pm$0.4046 & -6.8275   &  -5.8603 &  126  \\
          & HesBO &  -5.2117$\pm$ 1.4180 & -4.7430 $\pm$0.8296  &  -7.7866  & -6.2939 & 229   \\
          & REMBO & -3.526$\pm$0.9881 & -3.6328$\pm$0.3971 & -5.0612  &  -4.3895  &  164 \\
          & ALEBO &  -7.5033$\pm$1.5741  &  -5.9202$\pm$0.7087  &  -10.4432  &  -6.8159  &  25496  \\
          & VAEBO & -8.0703$\pm$1.3382 &-8.3918$\pm$2.0633 & -9.7784  & -11.4423  &  14205 \\
          & BAxUS & \multicolumn{2}{c}{-6.8716$\pm$0.9177} & \multicolumn{2}{c}{ -8.065 } &  1562  \\
          \cmidrule(lr){2-7}  
          & MCTSVS & \multicolumn{2}{c}{-5.3277$\pm$0.7252} & \multicolumn{2}{c}{ -6.5733 } &   104 \\
          & TuRBO & \multicolumn{2}{c}{-4.2479$\pm$1.3035} & \multicolumn{2}{c}{ -6.9445} &  291 \\
          & RDUCB & \multicolumn{2}{c}{-4.0750$\pm$0.6278} & \multicolumn{2}{c}{ -5.4911} &  16256 \\
          & DuMBO & \multicolumn{2}{c}{-5.9619$\pm$0.8675} & \multicolumn{2}{c}{-7.0755} & 21299\\
          & SBO-SE & \multicolumn{2}{c}{-4.6146$\pm$0.6944} & \multicolumn{2}{c}{-5.8465 } &  544  \\
          & DCEM & \multicolumn{2}{c}{-4.3841$\pm$0.4846} & \multicolumn{2}{c}{-5.3127} & 114\\
          & LMMAES & \multicolumn{2}{c}{-6.5043$\pm$1.1578} & \multicolumn{2}{c}{-7.805} & 126\\
          & Random Search & \multicolumn{2}{c}{ -1.8629$\pm$0.6874 } & \multicolumn{2}{c}{  -3.0679} &  12  \\
          \cmidrule(lr){2-7}         
          & DSEBO (ours) & \multicolumn{2}{c}{\textbf{-10.6613$\pm$2.4279}} & \multicolumn{2}{c}{\textbf{ -14.2513 }} &  294 \\
    \bottomrule
    \end{tabular}%
   }
  \label{tab:2}%
\end{table*}

\begin{table*}[htbp]
  \centering
  \caption{Detailed results of high-dimensional optimization algorithms on synthetic functions with $D=10000$.  For methods based on subspace embedding techniques, we respectively evaluate two hyperparameter settings with $d=30$ and $d=50$.}
  \resizebox{0.95\linewidth}{!}{
    \begin{tabular}{c|c|cccccc}
    \toprule
    \multicolumn{1}{c|}{\multirow{2}[4]{*}{Objective Function}} & \multirow{2}[4]{*}{Method} & \multicolumn{2}{c}{Convergence Value} & \multicolumn{2}{c}{Best Solution} & \multicolumn{1}{c}{\multirow{2}[4]{*}{Time (s)}} \\
\cmidrule(lr){3-4} \cmidrule(lr){5-6}          &       & d=30  & d=50  & d=30  & d=50  &  \\
    \midrule
    
    \multicolumn{1}{c|}{\multirow{8}[4]{*}{Rosenbrock (D=10000)}} 
          & SIRBO & 654297.3750$\pm$251160.9062 & 620616.1250$\pm$226600.6094 & 391632.6250 & 429690.8438 & 7255 \\
          & HesBO & 740927.6875$\pm$95669.8203 & 823125.9375$\pm$180349.7031 & 621726.5464 & 558716.9375 & 159 \\
          & REMBO & 4762362.9526$\pm$1058705.0084 & 6043180.2348$\pm$539507.1027 & 3476606.4671 & 5216455.1280 & 795 \\
          & BAxUS & \multicolumn{2}{c}{\textbf{34236.7869$\pm$2550.1297}} & \multicolumn{2}{c}{\textbf{29355.3671}} & 3995 \\
          \cmidrule(lr){2-7}  
          & MCTSVS & \multicolumn{2}{c}{57851.9000$\pm$6859.1123} & \multicolumn{2}{c}{50801.8242} & 272 \\
          & TuRBO & \multicolumn{2}{c}{1788346.6$\pm$1056224.8} & \multicolumn{2}{c}{404563.2812} & 389 \\
          & DCEM & \multicolumn{2}{c}{412348.2053$\pm$73896.1685} & \multicolumn{2}{c}{272307.0427} & 31\\
          & LMMAES & \multicolumn{2}{c}{572015.3508$\pm$114679.9362} & \multicolumn{2}{c}{357881.9302} & 54\\
          & Random Search & \multicolumn{2}{c}{3022829.2621$\pm$516710.6428} & \multicolumn{2}{c}{2262149.8444} & 18 \\
\cmidrule(lr){2-7}          & DSEBO (ours) & \multicolumn{2}{c}{36658.1680$\pm$4038.2712} & \multicolumn{2}{c}{31940.3965} & 1076 \\
\midrule
    \multicolumn{1}{c|}{\multirow{8}[4]{*}{Sphere (D=10000)}} & SIRBO & 114.9471$\pm$17.8656 & 106.1076$\pm$12.5741 & 92.1035 & 85.4588 & 7528 \\
          & HesBO & 143.2573$\pm$26.0684 & 135.1932$\pm$21.8210 & 113.4040 & 103.8779 & 230 \\
          & REMBO & 282.7653$\pm$54.2532 & 497.3164$\pm$34.3274 & 198.8264 & 446.1657 & 581 \\
          & BAxUS & \multicolumn{2}{c}{20.0685$\pm$2.4054} & \multicolumn{2}{c}{15.9891} & 3182 \\
          \cmidrule(lr){2-7}  
          & MCTSVS & \multicolumn{2}{c}{42.4199$\pm$11.5857} & \multicolumn{2}{c}{24.8201} & 383 \\
          & TuRBO & \multicolumn{2}{c}{174.1312$\pm$39.0554} & \multicolumn{2}{c}{118.9279} & 264 \\
          & DCEM & \multicolumn{2}{c}{124.469$\pm$14.4034} & \multicolumn{2}{c}{100.0146} & 30\\
          & LMMAES & \multicolumn{2}{c}{123.6035$\pm$14.0609} & \multicolumn{2}{c}{97.1999} & 50\\
          & Random Search & \multicolumn{2}{c}{68.6981$\pm$8.4385} & \multicolumn{2}{c}{53.9599} & 14 \\
\cmidrule(lr){2-7}          & DSEBO (ours) & \multicolumn{2}{c}{\textbf{6.4338$\pm$2.1632}} & \multicolumn{2}{c}{\textbf{4.6624}} & 343 \\
    \midrule
    \multicolumn{1}{c|}{\multirow{8}[4]{*}{Levy (D=10000)}} & SIRBO & 151.7623$\pm$8.6825 & 134.7558$\pm$19.8493 & 139.4174 & 112.8558 & 7632 \\
          & HesBO & 176.8307$\pm$52.7794 & 147.1899$\pm$18.5760 & 119.6019 & 131.4840 & 155 \\
          & REMBO & 1012.7800$\pm$231.5087 & 1266.2434$\pm$85.5595 & 606.2893 & 1168.6647 & 562 \\
          & BAxUS & \multicolumn{2}{c}{57.4391$\pm$0.0222} & \multicolumn{2}{c}{57.4031} & 3038 \\
          \cmidrule(lr){2-7}  
          & MCTSVS & \multicolumn{2}{c}{64.1901$\pm$5.2330} & \multicolumn{2}{c}{58.3096} & 451 \\
          & TuRBO & \multicolumn{2}{c}{72.8131$\pm$0.4172} & \multicolumn{2}{c}{72.2262} & 356 \\
          & DCEM & \multicolumn{2}{c}{164.0851$\pm$22.484} & \multicolumn{2}{c}{126.7344} & 41\\
          & LMMAES & \multicolumn{2}{c}{140.5668$\pm$13.6518} & \multicolumn{2}{c}{120.7209} & 65\\
          & Random Search & \multicolumn{2}{c}{131.8982$\pm$18.8193} & \multicolumn{2}{c}{102.7094} & 17 \\
\cmidrule(lr){2-7}          & DSEBO (ours) & \multicolumn{2}{c}{\textbf{2.7522$\pm$1.2717}} & \multicolumn{2}{c}{\textbf{1.8678}} & 290 \\
    \midrule
    \multicolumn{1}{c|}{\multirow{8}[4]{*}{Griewank (D=10000)}} & SIRBO & 655.2772$\pm$17.9869 & 652.7079$\pm$12.9206 & 636.1684 & 643.0330 & 8631 \\
          & HesBO & 571.2280$\pm$34.4433 & 711.1620$\pm$39.2822 & 522.5161 & 672.2633 & 157 \\
          & REMBO & \textbf{101.1997$\pm$0.0365} & 2173.1267$\pm$9.0637 & 101.1423 & 2161.6895 & 537 \\
          & BAxUS & \multicolumn{2}{c}{101.9942$\pm$0.5218} & \multicolumn{2}{c}{101.4823} & 3092 \\
          \cmidrule(lr){2-7}  
          & MCTSVS & \multicolumn{2}{c}{835.3915$\pm$6.4354} & \multicolumn{2}{c}{825.446} & 267 \\
          & TuRBO & \multicolumn{2}{c}{882.3385$\pm$3.0383} & \multicolumn{2}{c}{878.386} & 382 \\
          & DCEM & \multicolumn{2}{c}{830.4092$\pm$3.2544} & \multicolumn{2}{c}{825.6568} & 28\\
          & LMMAES & \multicolumn{2}{c}{840.1342$\pm$2.4772} & \multicolumn{2}{c}{835.4587} & 57\\
          & Random Search & \multicolumn{2}{c}{422.5575$\pm$1.2538} & \multicolumn{2}{c}{420.7823} & 16 \\
\cmidrule(lr){2-7}          & DSEBO (ours) & \multicolumn{2}{c}{101.4063$\pm$0.0356} & \multicolumn{2}{c}{\textbf{101.1153}} & 527 \\
    
\midrule
    \multicolumn{1}{c|}{\multirow{8}[4]{*}{Dixon (D=10000)}} 
          & SIRBO & 764620.5000$\pm$178713.5156 & 711741.6875$\pm$220006.9375 & 504689.5000  & 393210.5938  & 6855 \\
          & HesBO & 527328.7500$\pm$293799.1562 & 593128.7500$\pm$238018.3594 &  195464.7500 &  247606.3281 &  195 \\
          & REMBO & 8629979.1862$\pm$1030159.7500 & 9811402.4797$\pm$1358266.3750 & 7375704.5000  &  7227984.3564 &  658 \\
          & BAxUS & \multicolumn{2}{c}{40101.4234$\pm$7970.3999} & \multicolumn{2}{c}{34711.2812 } &  3463 \\
          \cmidrule(lr){2-7}  
          & MCTSVS & \multicolumn{2}{c}{288137.3750$\pm$118116.6797} & \multicolumn{2}{c}{138630.5781 } &  250 \\
          & TuRBO & \multicolumn{2}{c}{1712952.6250$\pm$153219.5156} & \multicolumn{2}{c}{ 1534449.3750} &  294 \\
          & DCEM & \multicolumn{2}{c}{906037.6463$\pm$127875.9288} & \multicolumn{2}{c}{757046.0837} & 33\\
          & LMMAES & \multicolumn{2}{c}{967289.2009$\pm$145768.8284} & \multicolumn{2}{c}{658107.9851} & 55\\
          & Random Search & \multicolumn{2}{c}{277199.8422$\pm$75239.0032} & \multicolumn{2}{c}{107642.0207 } & 19 \\
        \cmidrule(lr){2-7}          
         & DSEBO (ours) & \multicolumn{2}{c}{\textbf{38253.0625$\pm$8405.7793}} & \multicolumn{2}{c}{\textbf{23499.7630 }} &  366 \\
        \midrule
    \multicolumn{1}{c|}{\multirow{8}[4]{*}{Michalewicz (D=10000)}} 
          & SIRBO & -5.2707$\pm$0.8001 & -5.4101$\pm$0.5353 &   -6.2494 &  -6.1494  &  5983 \\
          & HesBO & -5.3715$\pm$0.6174 & -5.4323$\pm$0.6975 & -6.4223   &  -6.5907  &  803 \\
          & REMBO & 0.7975$\pm$0.2896 & 1.0713$\pm$0.1826 & 0.3354   &   0.8087  &  579  \\
          & BAxUS & \multicolumn{2}{c}{-10.3466$\pm$1.5608} & \multicolumn{2}{c}{ -11.8395 } &  3992 \\
          \cmidrule(lr){2-7}  
          & MCTSVS & \multicolumn{2}{c}{-10.7175$\pm$1.1713} & \multicolumn{2}{c}{-11.5057 } &  342 \\
          & TuRBO & \multicolumn{2}{c}{-4.2744$\pm$0.8549} & \multicolumn{2}{c}{-5.4950  } &  276 \\
          & DCEM & \multicolumn{2}{c}{-5.2218$\pm$0.6693} & \multicolumn{2}{c}{-6.3025} & 37\\
          & LMMAES & \multicolumn{2}{c}{-4.6498$\pm$0.4431} & \multicolumn{2}{c}{-5.4561} & 62\\
          & Random Search & \multicolumn{2}{c}{-3.2545$\pm$0.4323} & \multicolumn{2}{c}{ -3.9067 } & 14 \\
\cmidrule(lr){2-7}          
         & DSEBO (ours) & \multicolumn{2}{c}{\textbf{-11.7390$\pm$0.6309}} & \multicolumn{2}{c}{\textbf{ -12.7581 }} &  391 \\
    \bottomrule
    \end{tabular}%
  }
  \label{tab:3}%
\end{table*}%

\begin{table*}[htbp]
  \centering
  \caption{Detailed results of MAB strategies on synthetic functions with $D=1000$.}
  \resizebox{0.75\linewidth}{!}{
    \begin{tabular}{c|c|ccc}
    \toprule
    \multicolumn{1}{p{10em}|}{Objective Function} & Method & Convergence Value & Best Solution & Time (s) \\
    \midrule
    \multirow{10}[4]{*}{Rosenbrock (D=1000)} & Extreme & {2160763.7500$\pm$1467483.8750} & {571516.5625} & 359\\
& Random & {3700211.5000$\pm$1493292.5000} & {1157037.0000} & 371\\
& $\epsilon$-greedy & {2273714.5000$\pm$1358569.0000} & {52278.8203} & 1193\\
& C-UCB & {2211636.7500$\pm$1624777.0000} & {779282.4375} & 395\\
& Softmax & {2211636.7500$\pm$1624777.0000} & {779282.4375} & 398\\
& TS & {1856110.0000$\pm$834666.1875} & {383319.5625} & 474\\
& UCB-E & {2248272.2500$\pm$1751758.2500} & {779282.4375} & 463\\
& S-Halving & {2328358.7500$\pm$1153945.5000} & {463237.6562} & 401\\
& Expectation & {2248154.2500$\pm$1751778.3750} & {779282.4375} & 555\\
    \cmidrule(lr){2-5}          & DSEBO (ours) & \textbf{37699.6758$\pm$780.7019} & \textbf{37010.5820} & 1046 \\
    \midrule
    \multirow{10}[4]{*}{Sphere (D=1000)} & Extreme & {129.6299$\pm$181.8460} & {13.1881} & 1223\\
& Random & {150.9959$\pm$121.5409} & {25.4365} & 146\\
& $\epsilon$-greedy & {19.5962$\pm$3.7292} & {13.1885} & 468\\
& C-UCB & {22.8336$\pm$7.7990} & {13.1881} & 1603\\
& Softmax & {63.3230$\pm$127.7900} & {13.1881} & 1348\\
& TS & {20.9170$\pm$4.5001} & {13.1881} & 1791\\
& UCB-E & {22.8336$\pm$7.7990} & {13.1881} & 1555\\
& S-Halving & {80.7406$\pm$113.8652} & {13.1898} & 939\\
& Expectation & {20.5691$\pm$4.5567} & {13.1881} & 1613\\
    \cmidrule(lr){2-5}          & DSEBO (ours) & \textbf{3.9387$\pm$1.3304} & \textbf{2.4966} & 399 \\
    \midrule
    \multirow{10}[4]{*}{Levy (D=1000)} & Extreme & {363.8994$\pm$380.7985} & {4.2985} & 1013\\
& Random & {750.0236$\pm$276.8358} & {40.4695} & 134\\
& $\epsilon$-greedy & {217.8715$\pm$167.7238} & {2.9233} & 411\\
& C-UCB & {233.7007$\pm$167.9456} & {4.2985} & 1289\\
& Softmax & {272.7462$\pm$251.0918} & {4.2985} & 1393\\
& TS & {267.9188$\pm$218.1308} & {4.2985} & 1288\\
& UCB-E & {217.7698$\pm$150.8256} & {4.2985} & 1268\\
& S-Halving & {360.4880$\pm$277.0266} & {18.9699} & 976\\
& Expectation & {222.1218$\pm$201.5956} & {4.2985} & 1227\\
    \cmidrule(lr){2-5}          & DSEBO (ours) & \textbf{2.2816$\pm$0.7567} & \textbf{1.5805} & 268 \\
    \midrule
    \multirow{10}[4]{*}{Griewank (D=1000)} & Extreme & {11.4613$\pm$0.3793} & {11.2658} & 1285\\
& Random & {11.8220$\pm$1.5041} & {11.2802} & 95\\
& $\epsilon$-greedy & {11.5547$\pm$0.6718} & {11.2663} & 365\\
& C-UCB & {11.4613$\pm$0.3793} & {11.2658} & 1285\\
& Softmax & {11.4613$\pm$0.3793} & {11.2658} & 1289\\
& TS & {11.4613$\pm$0.3793} & {11.2658} & 1287\\
& UCB-E & {11.4613$\pm$0.3793} & {11.2658} & 1290\\
& S-Halving & {11.6494$\pm$1.0168} & {11.2120} & 1047\\
& Expectation & {11.4613$\pm$0.3793} & {11.2658} & 1301\\
    \cmidrule(lr){2-5}          & DSEBO (ours) & \textbf{11.2488$\pm$0.1092} & \textbf{10.9793} & 276 \\
    \midrule
    \multirow{10}[4]{*}{Dixon (D=1000)} 
          & Extreme & {6540823.5000$\pm$2834092.7500} & {1504931.8750} & 455\\
& Random & {6720030.5000$\pm$1582640.6250} & {4071802.5000} & 592\\
& $\epsilon$-greedy & {5106037.5000$\pm$1698677.7500} & {2753771.2500} & 362\\
& C-UCB & {4299205.5000$\pm$2713686.0000} & {51601.1992} & 465\\
& Softmax & {4299205.5000$\pm$2713686.0000} & {51601.1992} & 604\\
& TS & {3428899.5000$\pm$2103435.0000} & {33287.6367} & 621\\
& UCB-E & {4299205.5000$\pm$2713686.0000} & {51601.1992} & 716\\
& S-Halving & {6044710.5000$\pm$3704331.5000} & {2606495.7500} & 1124\\
& Expectation & {4299205.5000$\pm$2713686.0000} & {51601.1992} & 603\\
    \cmidrule(lr){2-5}          & DSEBO (ours) & \textbf{39076.9609$\pm$10246.4609} & \textbf{25189.1094} & 597 \\
    \midrule
    \multirow{10}[4]{*}{Michalewicz (D=1000)} 
          & Extreme & {-4.6286$\pm$2.2142} & {-8.8314} & 503\\
& Random & {-4.2663$\pm$0.8069} & {-5.8979} & 92\\
& $\epsilon$-greedy & {-5.7135$\pm$1.5460} & {-7.9152} & 271\\
& C-UCB & {-6.0991$\pm$0.9943} & {-7.8918} & 366\\
& Softmax & {-6.0994$\pm$0.9661} & {-7.3633} & 570\\
& TS & {-6.6686$\pm$1.2886} & {-8.8314} & 284\\
& UCB-E & {-6.2666$\pm$1.0264} & {-7.5300} & 204\\
& S-Halving & {-4.8470$\pm$1.2241} & {-6.6878} & 429\\
& Expectation & {-6.5533$\pm$1.3236} & {-8.8314} & 344\\
    \cmidrule(lr){2-5}          & DSEBO (ours) & \textbf{-10.6887$\pm$1.1657} & \textbf{-12.9019} & 465 \\
    \bottomrule
    \end{tabular}%
  }
  \label{tab:4}%
\end{table*}%

\begin{table*}[htbp]
  \centering
  \caption{Detailed results of MAB strategies on real-world tasks.}
  \resizebox{0.75\linewidth}{!}{
    \begin{tabular}{c|c|ccc}
    \toprule
    \multicolumn{1}{p{10em}|}{Objective Function} & Method & Convergence Value & Best Solution & Time (s) \\
    \midrule
    \multirow{10}[4]{*}{MSLR} & Extreme & {-8.8175$\pm$0.0982} & {-8.9992} & 548\\
& Random & {-8.8245$\pm$0.1437} & {-9.0390} & 129\\
& $\epsilon$-greedy & {-8.7972$\pm$0.0639} & {-8.9243} & 346\\
& C-UCB & {-8.8482$\pm$0.0580} & {-8.9549} & 494\\
& Softmax & {-8.8766$\pm$0.0657} & {-9.0177} & 663\\
& TS & {-8.8586$\pm$0.0928} & {-9.0130} & 744\\
& UCB-E & {-8.9054$\pm$0.1266} & {-9.1812} & 332\\
& S-Halving & {-8.8690$\pm$0.1694} & {-9.1396} & 557\\
& Expectation & {-8.8185$\pm$0.1308} & {-9.1285} & 627\\
    \cmidrule(lr){2-5}          & DSEBO (ours) & \textbf{-8.9396$\pm$0.0914} & \textbf{-9.1933} & 245 \\
    \midrule
    \multirow{10}[4]{*}{Lasso-Hard} & Extreme & {52.0638$\pm$19.5580} & {7.6819} & 402\\
& Random & {52.3876$\pm$15.1716} & {15.5185} & 342\\
& $\epsilon$-greedy & {44.1027$\pm$17.8642} & {7.4094} & 438\\
& C-UCB & {43.4808$\pm$23.4663} & {7.4081} & 819\\
& Softmax & {45.7185$\pm$25.1736} & {7.4081} & 821\\
& TS & {39.6904$\pm$24.0790} & {6.7548} & 783\\
& UCB-E & {46.0051$\pm$22.9438} & {7.6461} & 382\\
& S-Halving & {53.6740$\pm$14.7570} & {27.8280} & 405\\
& Expectation & {42.9734$\pm$21.6694} & {7.4081} & 795\\
    \cmidrule(lr){2-5}          & DSEBO (ours) & \textbf{3.8613$\pm$0.4896} & \textbf{3.4248} & 483 \\
    \midrule
    \multirow{10}[4]{*}{LIMO} & Extreme & {-3.9495$\pm$1.0746} & {-5.7674} & 406\\
& Random & {-3.7930$\pm$1.0581} & {-5.2494} & 161\\
& $\epsilon$-greedy & {-4.0860$\pm$0.8866} & {-6.0472} & 276\\
& C-UCB & {-5.1490$\pm$1.6835} & {-8.1142} & 151\\
& Softmax & {-4.4801$\pm$1.1440} & {-6.0536} & 433\\
& TS & {-4.4772$\pm$0.7694} & {-5.7674} & 386\\
& UCB-E & {-4.8374$\pm$1.2162} & {-6.6388} & 789\\
& S-Halving & {-3.9992$\pm$1.1429} & {-6.6801} & 193\\
& Expectation & {-4.0141$\pm$1.5697} & {-6.2238} & 376\\
    \cmidrule(lr){2-5}          & DSEBO (ours) & \textbf{-10.6613$\pm$2.4279} & \textbf{-14.2513} & 294 \\
    \bottomrule
    \end{tabular}%
  }
  \label{tab:5}%
\end{table*}%

\end{document}